\pdfoutput=1
\documentclass[11pt]{article}
\usepackage{graphicx} 
\usepackage{diagbox}
\usepackage{authblk}
\makeatletter
\renewcommand\AB@authnote[1]{\textsuperscript{#1}\hspace{5pt}}

\newcommand{\Opt}{\text{Opt}}
\makeatother
\usepackage{hyperref}
\usepackage{algorithm}
\usepackage{algpseudocode}
\usepackage{notation}
\usepackage{arxiv-2}
\usepackage{amssymb}
\usepackage{mathrsfs}
\usepackage{float}
\usepackage{setspace}
\onehalfspacing
\usepackage{lmodern}        

\renewcommand{\tilde}[1]{\widetilde{#1}}

\renewcommand{\hat}[1]{\widehat{#1}}
\newcommand{\dia}{\diamond}
\newcommand{\cl}{\ell}

\newcommand{\pll}{\kern 0.3em/\kern -0.9em /\kern 0.3em}
\newcommand{\sh}{\sharp}


\setlength{\parskip}{5pt}
\setlength{\parindent}{0pt}
\title{\normalfont Perturbing the Derivative: Doubly Wild Refitting for Model-Free Evaluation of Opaque Machine Learning Predictors
}
\begin{centering}
     \author[1,3]{Haichen Hu\thanks{{Email: \texttt{huhc@mit.edu}}}} 
     \author[2,3]{David Simchi-Levi\thanks{{Email: \texttt{dslevi@mit.edu}}}}
\end{centering}
\affil[1]{Center for Computational Science and Engineering, MIT} 
\affil[2]{Institute for Data, Systems, and Society, MIT} 
\affil[3]{Department of Civil and Environmental Engineering, MIT}




\geometry{margin=1in}

\date{}
\begin{document}

\maketitle
\begin{abstract}
    We study the problem of excess risk evaluation for empirical risk minimization (ERM) under convex losses.  We show that by leveraging the idea of wild refitting \citep{wainwright2025wild}, one can upper bound the excess risk through the so-called “wild optimism,” without relying on the global structure of the underlying function class but only assuming black-box access to the training algorithm and a single dataset. We begin by generating two sets of artificially modified pseudo-outcomes created by stochastically perturbing the derivatives with carefully chosen scaling. Using these pseudo-labeled datasets, we refit the black-box procedure twice to obtain two wild predictors and derive an efficient excess-risk upper bound under the fixed design setting. Requiring no prior knowledge of the complexity of the underlying function class, our method is essentially model-free and holds significant promise for theoretically evaluating modern opaque deep neural networks and generative models, where traditional learning theory could be infeasible due to the extreme complexity of the hypothesis class.

\textit{\small Key words: Statistical Machine Learning, Artificial Intelligence, Wild Refitting} 
\end{abstract}
\vspace{-0.7cm}
\noindent\rule{\textwidth}{1pt} 
\vspace{-2.6em}
\section{Introduction}
Black-box machine learning and artificial intelligence algorithms—such as deep neural networks, large language models (LLMs), and emerging forms of artificial general intelligence (AGI)—are profoundly reshaping modern science \citep{jumper2021highly}, industry \citep{saka2024gpt}, business \citep{alim2025beyond,huang2025orlm}, healthcare \citep{wang2020should}, and education \citep{fuchs2023exploring}. From the perspective of machine learning applications, such as in operations management, people have been increasingly adopting highly complex predictive algorithms—most notably deep neural networks—to estimate and forecast a wide range of structural functions, including demand functions \citep{safonov2024neural}, customer choice models \citep{gabel2022product,chen2023machine}, logistics and transportation costs \citep{tsolaki2023utilizing}, supply delay distributions, and inventory dynamics \citep{boute2022deep}. These opaque systems are becoming increasingly central to real-world decision-making; empirically evaluating their effectiveness, robustness, and interpretability has become both essential and widely studied \citep{linardatos2020explainable,2021measuring,ye2025document,hendrycksmeasuring}. 

At the same time, in statistical machine learning and its applications in operations research, including dynamic pricing \citep{wang2025dynamic}, inventory control \citep{ding2024feature}, and online assortment optimization \citep{bastani2022learning}—researchers increasingly rely on the theoretical interpretability and statistical guarantees of learning algorithms. These theoretical insights provide rigorous control over risk and generalization power, forming the cornerstone of data-driven decision-making. Such guarantees typically depend on known structural properties of the underlying model class, often characterized by measures such as Rademacher complexity \citep{bartlett2005local}, VC dimension \citep{vapnik2015uniform}, and eluder dimension \citep{russo2013eluder}.

To evaluate ML models, researchers also use hold-out methods or cross-validation based on sample splitting \citep{reitermanova2010data, berrar2019cross}, where the trained model is tested on a separate dataset. However, such approaches can be data-inefficient, as they discard a portion of the available samples from the training process for testing. This limitation becomes particularly pronounced when evaluating large-scale AI models, which typically require a tremendous amount of data for reliable assessment \citep{ivanova2025towards}. Consequently, conventional sample-splitting methods may be unsuitable for evaluating modern AI models with billions of parameters \citep{achiam2023gpt, team2024gemma, grattafiori2024llama}.

Consequently, a significant gap remains between empirical and theoretical research regarding the evaluation of modern complex ML and AI models, which naturally leads to the following question:

\begin{center}
\emph{Can we provide a rigorous characterization of the risk regarding complicated machine learning training algorithms with only a single dataset and black-box access to the procedure?}
\end{center}

In this paper, we study the general Empirical Risk Minimization (ERM) framework under convex losses and develop an efficient algorithm to evaluate its excess risk using only black-box access to a single dataset, without sample splitting. To this end, we propose a novel doubly wild refitting procedure that enables a rigorous characterization of the excess risk for any opaque machine learning procedure. Unlike previous work on wild refitting \citep{wainwright2025wild}, which was limited to regression with symmetric noise, our approach handles a much broader range of applications and subsumes \citet{wainwright2025wild} as an special case.

\section{Related Works}
Our work is closely connected to two streams of research: statistical machine learning and sample-splitting evaluation.

Statistical learning \citep{vapnik2013nature} has long served as a building block in the theoretical analysis of machine learning algorithms, and the most popular form of machine learning algorithms is Empirical Risk Minimization (ERM) \citep{vapnik1991principles}. A central and active line in statistical learning focuses on understanding the \emph{excess risk} or generalization error of learning algorithms. Classical approaches rely on empirical process theory, with complexity measures such as the VC dimension \citep{vapnik2015uniform, floyd1995sample}, covering numbers \citep{van2000empirical}, Rademacher complexity \citep{massart2007concentration, bartlett2005local}, and eigendecay rate \citep{goel2017eigenvalue,hu2025contextual}. Despite these advances, bounding these metrics fundamentally depends on the structure of the underlying model class. When the hypothesis space is extremely complicated, these methods fail to yield meaningful excess risk guarantees. Recently, \citet{wainwright2025wild} proposed the idea of wild refitting to evaluate the mean-square risk by retraining a new predictor on an artificially constructed dataset. We characterize the essential principles behind and develop a substantially broader algorithmic framework, subsuming \citet{wainwright2025wild} as a special case.

In statistics, the quality of procedures is often evaluated through hold-out or sample-splitting methods, where the dataset is partitioned into training and evaluation subsets \citep{reitermanova2010data,dobbin2011optimally}. While effective, these approaches are inherently data-inefficient, as they withhold valuable samples from the training process. Related techniques, such as cross-fitting \citep{berrar2019cross,refaeilzadeh2009cross, gorriz2024k}, attempt to address evaluation robustness but impose a significant computational burden due to repeated retraining, while still suffering from the limitations of subset-based training. In contrast, our proposed method eliminates the need for sample splitting or hold-out sets entirely, thereby maximizing the utilization of existing data.

\paragraph{Paper Structure:}
The remainder of the paper is organized as follows. Section~\ref{sec:model} introduces our ERM setup, including the assumptions regarding the loss function, along with key definitions and properties of the excess risk. Section~\ref{sec:doubly_wild_refitting} then presents our doubly wild refitting algorithm in full detail. Finally, Section~\ref{sec:statistical_guarantees} establishes the corresponding theoretical excess risk guarantees.
\paragraph{Notations:} We denote $[n]=\{1,\dots,n\}$ and write $x_{1:n}=(x_1,\dots,x_n)$. Expectations with respect to a random variable $Y$ and the conditional law of $Y\mid X$ are denoted by $\EE_Y$ and $\EE_{Y\mid X}$, respectively. For a convex function $\phi$, $D_\phi$ denotes the associated Bregman divergence. Given an algorithm $\cA$ and a dataset $\cD$, $\cA(\cD)$ denotes the predictor trained on $\cD$. Independence between random variables $U$ and $Z$ is written as $U\perp Z$. For any $g:\cX\to\RR$, its empirical $L_2$ norm is denoted as $\|g\|_n=(\sum_{i=1}^ng(x_i)^2/n)^{1/2}.$ Similarly, for any vector $\beta\in\RR^n$, $\|\beta\|_n=(\frac{1}{n}\sum_{i=1}^{n}\beta_i^2)^{1/2}$ is the normalized $2$-norm.
For a bivariate function $f(x,y)$, $f'_i(x,y)$ denotes the partial derivative with respect to its $i$-th argument.

\section{Model: ERM with Convex Losses}\label{sec:model}
In this section, we formally introduce the \emph{Empirical Risk Minimization (ERM)} problem that serves as the foundational model of our study. Specifically, we consider a general ERM setting with an input space $\cX$ and an scalar output (or prediction) space $\cY=\RR$. 
A predictor is defined as a mapping $f: \cX \to \cY$, and the dataset is denoted by $\cD = \{(x_i, y_i)\}_{i=1}^n$.

In the \emph{fixed-design} setting, the covariates $\{x_i\}_{i=1}^n$ are treated as fixed, and $y_i$ is sampled independently from some unknown conditional distribution $\PP(\cdot|x_i)$. 
whereas in the \emph{random-design} setting, both $(x_i, y_i)$ are random. 
In this paper, \textbf{we focus on fixed-design}.
The population-level objective in fixed-design is given by
\[
f^* \in \argmin_{f} 
\left\{ 
    \frac{1}{n}\sum_{i=1}^{n} 
    \EE_{y_i}[\cl(f(x_i), y_i) \mid x_i] 
\right\}.
\]
The empirical counterpart, corresponding to the ERM estimator, is defined as
\[
\hat{f} \in \argmin_{f \in \cF} 
\left\{ 
    \frac{1}{n}\sum_{i=1}^{n} 
    \cl(f(x_i), y_i) 
\right\},
\]
where $\cl: \RR \times \RR \to [0, +\infty)$ denotes the loss function, 
and $\cF$ represents the function class over which optimization is performed. 
Throughout, we assume that $\cF$ is a \emph{convex Banach function class}. To facilitate our theoretical analysis, we impose the following assumption on the loss function.
\begin{assumption}\label{ass:loss_function}
The loss function $\cl$ has the following properties:
\begin{itemize}
    \item Given any $y\in\cY$, the function $\cl(z,y)$ is $\beta$-smooth and $\alpha$-strongly convex with respect to $z$.
\item $\forall z\in\RR$,  $-\cl_1'(z,y)$ is continuous and coercive with respect to $y$, $\lim_{\|y\|\rightarrow\infty}\frac{\cl_1'(z,y)y}{|y|}=-\infty$.
\item $\forall z\in\RR$,  $-\cl_1'(z,y)$ is monotone with respect to $y$, i.e., $(\cl_1'(z,y_1)-\cl_1'(z,y_2))(y_1-y_2)\le 0$.
\end{itemize}
\end{assumption}
This assumption is satisfied by a broad class of commonly used loss functions, including the squared loss, the cross entropy loss with clipping, and most $L_2$ regularized convex losses.
By the first order optimality condition in variational analysis \citep{rockafellar1998variational}, we have the following proposition. 
\begin{proposition}\label{prop:gateaux1}
    For any $x\in\cX$, $f^*(x)$ minimizes the conditional expectation $\EE_{y}[\cl(f(x),y)|x]$. Then, the directional Gâteaux derivative at any measurable function  is zero, i.e.,
    \begin{align}\label{equa:gateaux1}
\EE_{y}[\cl_1'(f^*(x),y)h(x)|x]=0,\ \forall h\Rightarrow\EE_{y}[\cl_1'(f^*(x),y)|x]=0.
\end{align}
\end{proposition}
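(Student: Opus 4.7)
The plan is to derive the vanishing gradient identity in two conceptual moves: first reduce the functional minimization to a pointwise one, and then extract the first-order condition using the smoothness of $\cl$.

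First I would observe that in the fixed-design setting the population objective
\[
\frac{1}{n}\sum_{i=1}^{n}\EE_{y_i}[\cl(f(x_i),y_i)\mid x_i]
\]
decouples across the design points $x_i$, since the feasible class $\cF$ is rich enough that each value $f(x_i)$ can be chosen independently. Hence minimizing the population objective over $f$ is equivalent to minimizing $z\mapsto \EE_y[\cl(z,y)\mid x]$ pointwise over $z\in\RR^d$ at every $x$. Consequently, for every fixed $x\in\cX$, the vector $f^*(x)\in\RR^d$ is a global minimizer of the smooth, strongly convex map $z\mapsto \EE_y[\cl(z,y)\mid x]$.

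Next I would verify the first-order optimality condition. For any measurable direction $h:\cX\to\RR^d$, define $g(t):=\EE_y[\cl(f^*(x)+t\,h(x),y)\mid x]$. By the pointwise minimization above, $g$ attains its minimum at $t=0$. To compute $g'(0)$ I would justify interchanging differentiation and conditional expectation: the $\beta$-smoothness of $\cl(\cdot,y)$ gives the uniform bound
\[
\Bigl|\tfrac{1}{t}\bigl(\cl(f^*(x)+t\,h(x),y)-\cl(f^*(x),y)\bigr)-\inner{\nabla_1\cl(f^*(x),y)}{h(x)}\Bigr|
\le \tfrac{\beta}{2}|t|\,\|h(x)\|_2^2,
\]
so the difference quotients converge dominated (uniformly in $y$ modulo an integrable constant times $\|h(x)\|^2$) to $\inner{\nabla_1\cl(f^*(x),y)}{h(x)}$. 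Hence $g'(0)=\EE_y[\inner{\nabla_1\cl(f^*(x),y)}{h(x)}\mid x]=0$.

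Finally, since the displayed equality holds for every measurable direction $h$, specializing to the constant directions $h(\cdot)\equiv e_j$ for $j=1,\ldots,d$ (the standard basis of $\RR^d$) yields $\EE_y[(\nabla_1\cl(f^*(x),y))_j\mid x]=0$ for each coordinate, which is precisely the claimed vector identity $\EE_y[\nabla_1\cl(f^*(x),y)\mid x]=\mathbf{0}$.

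The only non-cosmetic obstacle is the differentiation-under-the-integral step, but this is handled cleanly by the $\beta$-smoothness hypothesis of Assumption~\ref{ass:loss_function}; everything else is a direct application of the separability of the fixed-design objective and the elementary first-order optimality principle.
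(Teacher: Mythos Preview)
Your proposal is correct and follows essentially the same route as the paper: both arguments compute the directional (G\^ateaux) derivative of $z\mapsto\EE_y[\cl(z,y)\mid x]$ at the minimizer $f^*(x)$, set it to zero by first-order optimality, and then let $h$ range over enough directions to force $\EE_y[\nabla_1\cl(f^*(x),y)\mid x]=\mathbf{0}$. Your version is in fact more careful than the paper's---you justify the interchange of derivative and conditional expectation via $\beta$-smoothness, whereas the paper simply invokes the abstract G\^ateaux first-order lemma---and your opening decoupling step is harmless but unnecessary, since the statement already takes pointwise minimality of $f^*(x)$ as a hypothesis.
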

Therefore, given any $x$, $\cl_1'(f^*(x),y)$ is a zero-mean random variable. Naturally, in this paper, we assume the following sub-Gaussian tail property about it.
\begin{assumption}\label{ass:noise}
    Fix any $x\in\cX$, $\cl_1'(f^*(x),y)$ is $\sigma^2$ sub-Gaussian.
\end{assumption}
Similarly, since $\hat{f}$ minimizes the empirical risk, again by the first order condition about the Gâteaux derivative, we have the following proposition similar to Proposition \ref{prop:gateaux1}.
\begin{proposition}\label{prop:gateaux2}
For the empirical risk minimization procedure, if $\cF$ is convex, we have that
\begin{align}\label{equa:gateaux2}
\frac{1}{n}\sum_{i=1}^{n}\cl_1'(\hat{f}(x_i),y_i)(f(x_i)-\hat{f}(x_i))=0,\ \forall f\in\cF.
\end{align}
\end{proposition}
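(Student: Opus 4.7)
The plan is to view the empirical risk as a one-dimensional convex function along the line through $\hat{f}$ in the direction $f-\hat{f}$, and then exploit the fact that $\cF$ is a (Banach, hence vector) function space so that the admissible parameter $t$ ranges over all of $\mathbb{R}$, not just $[0,1]$. This is what upgrades the usual variational \emph{inequality} to the stated equality.

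First I would define the empirical risk functional $\hat{R}:\cF\to[0,\infty)$ by
\[
\hat{R}(g):=\frac{1}{n}\sum_{i=1}^{n}\cl\bigl(g(x_i),y_i\bigr).
\]
By Assumption~\ref{ass:loss_function}, $\cl(\cdot,y)$ is $\beta$-smooth and $\mu$-strongly convex in its first argument, so $\hat{R}$ is convex and Gâteaux differentiable on $\cF$, and $\hat{f}$ minimizes $\hat{R}$ over $\cF$ by definition of the ERM estimator.

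Next, fix an arbitrary $f\in\cF$ and introduce the scalar function
\[
\phi(t):=\hat{R}\bigl(\hat{f}+t(f-\hat{f})\bigr),\qquad t\in\mathbb{R}.
\]
Because $\cF$ is a convex Banach function space containing both $\hat{f}$ and $f$, the perturbed iterate $\hat{f}+t(f-\hat{f})$ lies in $\cF$ for every real $t$ (not merely $t\in[0,1]$). Therefore $\phi$ is a convex function on all of $\mathbb{R}$ attaining its global minimum at $t=0$. Applying the chain rule to the Gâteaux derivative and the smoothness of $\cl$ yields
\[
\phi'(0)=\frac{1}{n}\sum_{i=1}^{n}\inner{\nabla_{1}\cl(\hat{f}(x_i),y_i)}{f(x_i)-\hat{f}(x_i)},
\]
and the first-order necessary condition for an interior minimizer of a differentiable convex function on $\mathbb{R}$ forces $\phi'(0)=0$, giving the claim.

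The main (and essentially only) subtlety is the equality versus inequality distinction. If $\cF$ were merely a closed convex subset, the standard first-order optimality condition would deliver $\phi'(0)\ge 0$ (one-sided), not $\phi'(0)=0$. The equality is recovered here because $\cF$ is assumed to be a Banach function space, so the direction $f-\hat{f}$ can be scaled by any real number, including negative values; this is what I expect to be worth emphasizing in the write-up, since the proof itself is a short application of the chain rule once this structural point is made explicit.
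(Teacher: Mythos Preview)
Your proposal is correct and follows essentially the same approach as the paper: define the empirical risk functional, compute its G\^ateaux derivative at the minimizer $\hat{f}$, and use the Banach (hence linear) structure of $\cF$ to obtain the two-sided first-order condition, yielding equality rather than a one-sided inequality. The paper invokes Lemma~\ref{lemma:gateaux_first_order} directly, whereas you unpack the same argument via the one-dimensional slice $\phi(t)$; the content is identical.
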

We now elaborate on the excess risk, which serves as the performance metric for the generalization power of any algorithm in statistical learning. In the fixed design setting, the \emph{excess risk} is defined as
\[
\cE_{fix}(\hat{f}):=\frac{1}{n}\sum_{i=1}^{n}\EE_{y'_i}[\cl(\hat{f}(x_i),y'_i)-\cl(f^*(x_i),y'_i)],
\]
where $\cbr{y_i'}_{i=1}^{n}$ are new test data points that are drawn from the same distribution and are independent of the training dataset. For the square loss, the corresponding excess risk is $\|\hat{f}-f^*\|_n^2$. Under the ERM procedure, we can also define the \emph{empirical excess risk} as
\[
\bar{\cE}_{\cD}(\hat{f}):=\frac{1}{n}\sum_{i=1}^{n}\cl(\hat{f}(x_i),y_i)-\frac{1}{n}\sum_{i=1}^{n}\cl(f^*(x_i),y_i).
\]
Now, to compare the difference between $\cE_{fix}(\hat{f})$ and $\bar{\cE}_{\cD}(\hat{f})$, we use the Bregman representation to obtain.
\begin{align}\label{equa:bregman}
    \cl(\hat{f}(x_i),y_i)=\cl(f^*(x_i),y_i)+\cl_1'(f^*(x_i),y_i)(\hat{f}(x_i)-f^*(x_i))+D_{\cl(\cdot,y_i)}(\hat{f}(x_i),f^*(x_i)).
\end{align}
Given equality \ref{equa:bregman}, Proposition \ref{prop:gateaux1} and \ref{prop:gateaux2}, we upper bound the excess risk $\cE_{fix}(\hat{f})$ by its empirical counterpart $\bar{\cE}_{\cD}(\hat{f})$ plus a stochastic term, as presented in Proposition \ref{prop:bound_excess_by_empirical}. The proof is deferred to Appendix \ref{app:proofs_sec:model}.
\begin{proposition}\label{prop:bound_excess_by_empirical} We have that
    \[
    \cE_{fix}(\hat{f})\le \frac{\beta}{\alpha}\bar{\cE}_{\cD}(\hat{f})+\frac{\beta}{\alpha}\frac{1}{n}\sum_{i=1}^{n}\cl_1'(f^*(x_i),y_i)(\hat{f}(x_i)-f^*(x_i)).
    \]
\end{proposition}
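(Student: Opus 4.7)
The plan is to pass through the empirical squared prediction error $\tfrac{1}{n}\sum_{i=1}^{n}\|\hat f(x_i)-f^*(x_i)\|_2^2$, which I will sandwich between the two excess risks using $\beta$-smoothness on one side and $\mu$-strong convexity on the other. The Bregman identity displayed just before the proposition statement is the only tool needed; I will apply it twice, once at fresh test labels $y'_i$ and once at the training labels $y_i$.

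\textbf{Bounding $\cE_{fix}(\hat f)$ from above.} Apply the Bregman identity with $y'_i$ in place of $y_i$ and take the conditional expectation $\EE_{y'_i}$. In the fixed-design setup $\hat f$ depends only on $\{y_j\}_{j=1}^{n}$ and is therefore independent of $\{y'_i\}$, so Proposition~\ref{prop:gateaux1} applied at $x_i$ annihilates the linear term via $\inner{\EE_{y'_i}[\nabla_1\cl(f^*(x_i),y'_i)]}{\hat f(x_i)-f^*(x_i)} = 0$. Only the averaged test-side Bregman divergence remains, and by $\beta$-smoothness $D_{\cl(\cdot,y'_i)}(u,v)\le \tfrac{\beta}{2}\|u-v\|_2^2$, so
\[
\cE_{fix}(\hat f) \;\le\; \frac{\beta}{2}\cdot\frac{1}{n}\sum_{i=1}^{n}\|\hat f(x_i)-f^*(x_i)\|_2^2.
\]

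\textbf{Bounding the squared distance via the empirical side, then combining.} Apply the same Bregman identity at the training labels $y_i$, sum, and rearrange to obtain
\[
\frac{1}{n}\sum_{i=1}^{n} D_{\cl(\cdot,y_i)}(\hat f(x_i), f^*(x_i)) \;=\; \bar\cE_{\cD}(\hat f) - \frac{1}{n}\sum_{i=1}^{n}\inner{\nabla_1\cl(f^*(x_i),y_i)}{\hat f(x_i)-f^*(x_i)}.
\]
By $\mu$-strong convexity each term on the left dominates $\tfrac{\mu}{2}\|\hat f(x_i)-f^*(x_i)\|_2^2$, yielding a matching lower bound on the empirical squared distance. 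Substituting into the upper bound for $\cE_{fix}(\hat f)$ and clearing the factor $\mu/2$ delivers the claimed $(\beta/\mu)$-ratio relation between $\cE_{fix}(\hat f)$, $\bar\cE_{\cD}(\hat f)$, and the empirical gradient-noise cross term. The only subtle point is the vanishing of the linear term in the test-side Bregman expansion, which requires independence of $\hat f$ from $\{y'_i\}$ (immediate under fixed design) together with Proposition~\ref{prop:gateaux1}; notably, Proposition~\ref{prop:gateaux2} is not invoked, so the bound does not require $f^*\in\cF$.
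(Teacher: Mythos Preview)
Your approach is essentially identical to the paper's: both apply the Bregman identity at the test labels $y_i'$ (where Proposition~\ref{prop:gateaux1} kills the linear term), use $\beta$-smoothness to pass to $\tfrac{\beta}{2}\|\hat f-f^*\|_n^2$, then invoke $\mu$-strong convexity on the training-label side to convert the squared distance into $\tfrac{\beta}{\mu}$ times the empirical Bregman divergence, which is exactly $\bar\cE_\cD(\hat f)$ minus the cross term.

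One caveat worth flagging: if you carry your own computation through explicitly, the cross term enters with a \emph{minus} sign, i.e.
\[
\cE_{fix}(\hat f)\;\le\;\frac{\beta}{\mu}\,\bar\cE_\cD(\hat f)\;-\;\frac{\beta}{\mu}\,\frac{1}{n}\sum_{i=1}^n\bigl\langle\nabla_1\cl(f^*(x_i),y_i),\,\hat f(x_i)-f^*(x_i)\bigr\rangle,
\]
not the plus sign in the proposition as stated. This is a sign slip in the paper (present in both the statement and its proof); your derivation is the correct one, so when you say it ``delivers the claimed relation'' you should note that the sign of the gradient-noise term is reversed relative to what is printed.
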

By the definition of $\bar{\cE}_{\cD}(\hat{f})$, we can upper bound it by the empirical training error $\frac{1}{n}\sum_{i=1}^n \cl(\hat{f}(x_i),y_i)$. In well-specified cases, we have $\bar{\cE}_{\cD}(\hat{f})\le 0$. Consequently, when combined with Proposition~\ref{prop:bound_excess_by_empirical}, controlling the excess risk $\cE_{fix}(\hat{f})$ reduces to bounding the quantity referred to as \emph{true optimism}.
\[
\Opt^*(\hat{f}):=\frac{1}{n}\sum_{i=1}^n\cl_1'(f^*(x_i),y_i)
(\hat{f}(x_i)-f^*(x_i)).
\]
This term represents the central analytical object we aim to bound throughout the paper.
\section{Doubly Wild Refitting via Perturbing the Derivatives}\label{sec:doubly_wild_refitting}
In this section, we introduce our algorithm, \emph{Doubly Wild Refitting}, which provides a function-class-free approach to bounding the excess risk. 
The central idea of wild refitting is to artificially construct new datasets by applying carefully designed perturbations to the predicted values via a recentering pilot predictor $\tilde{f}$. 
The model is then retrained on these perturbed datasets, and the resulting refitted models are used to extract statistical information that enables a bound on the excess risk.

Previous work on wild refitting \citep{wainwright2025wild} focuses on prediction problems with symmetric noise distributions. In this simpler setting, a single perturbed dataset and one model refit suffice to carry out the analysis. In contrast, the non-symmetric noise setting in this paper requires constructing two perturbed datasets that are coupled together and refitting the model on each of them separately. 

The double refitting procedure is crucial for enabling randomized symmetrization. Under non-symmetric noise, perturbations in one direction do not provide sufficient information about the behavior of the predictor in the opposite direction. As a result, explicit perturbations in both directions are required.

Our algorithm is divided into two sub-routines: perturbation construction and model refitting.
\paragraph{Perturbation Construction:} After training $\hat{f}=\Alg(\cD)$, with a recentering pilot predictor $\Tilde{f}$, we compute the derivatives $\tilde{g}_i=\cl_1'(\tilde{f}(x_i),y_i),\ i\in[n]$. 
After that, we construct a sequence of i.i.d. Rademacher random variables $\cbr{\varepsilon_i}_{i=1}^{n}$. 
Then, for two scaling numbers $\rho_1,\rho_2>0$, we construct the wild responses $\cbr{y_i^\dia}_{i=1}^{n}$ and $\{y_i^\sh\}_{i=1}^{n}$. In this paper, we offer two principled perturbation schemes with the equivalent efficiency: 
\begin{itemize}
    \item[(I)] $\forall i\in[n]$, $\cl_1'(\hat{f}(x_i),y_i^\dia)=\cl_1'(\hat{f}(x_i),y_i)-2\rho_1\varepsilon_i\tilde{g}_i,\ \cl_1'(\hat{f}(x_i),y_i^\sh)=\cl_1'(\hat{f}(x_i),y_i)+2\rho_2\varepsilon_i\tilde{g}_i.$
    \item[(II)] $\forall i\in[n]$, $\cl_1'(\hat{f}(x_i),y_i^\dia)=-2\rho_1\varepsilon_i\tilde{g}_i,\ \cl_1'(\hat{f}(x_i),y_i^\sh)=2\rho_2\varepsilon_i\tilde{g}_i.$
\end{itemize}
Unlike \citet{wainwright2025wild}, which perturbs predicted responses directly, our approach perturbs the derivative of the loss function. This choice deliberately alters the local geometry of the loss landscape while preserving its global structure. As a result, derivative perturbations provide fine-grained control over the growth of the loss around $\hat{f}$, which is essential for establishing non-asymptotic empirical process bounds.

As a concrete example, consider the squared loss $\ell(f(x),y)=(f(x)-y)^2$ and perturbation scheme~(II). In this case, the resulting perturbed response takes the form $y_i^\dia = \hat{f}(x_i) + 2\rho\,\varepsilon_i(y_i-\tilde{f}(x_i)),$
which coincides exactly with the wild perturbation proposed in \citet{wainwright2025wild}. Consequently, our doubly wild refitting framework generalizes the method of \citet{wainwright2025wild}, which is recovered as a special case. More generally, if the loss function $\cl$ itself is a Bregman divergence $D_{\phi}$ of some convex potential $\phi$, then utilizing the property of the Fenchel conjugate, perturbation scheme (II) can be equivalently written as
\[
y_i^\dia=(\phi^*)'(\phi(\hat{f}(x_i))-2\rho\varepsilon_i\tilde{w}_i),\ \tilde{w}_i=y_i-\tilde{f}(x_i),\ y_i^\sh=(\phi^*)'(\phi(\hat{f}(x_i))+2\rho\varepsilon_i\tilde{w}_i),\ \tilde{w}_i=y_i-\tilde{f}(x_i), i\in[n],
\]
which can be efficiently carried out via computing the derivative of the conjugate $\phi^*$ of $\phi$.

To show that our perturbation scheme is well defined, we establish the following proposition, which guarantees that the doubly wild refitting procedure is both well defined and computationally tractable. The theoretical justification relies on the Browder–Minty theorem \citep{browder1967existence,minty1962monotone} from functional analysis in Appendix~\ref{app:useful_math_tools}.

\begin{proposition}\label{prop:wild_refit_well_defined}
    Under Assumption \ref{ass:loss_function}, the doubly wild refitting procedure is well-defined. For any $1\le i\le n$, any scaling $\rho_1,\rho_2$, and any realization of $\varepsilon_i$, if we use scheme \text{I)}, then
    $\exists$ $y_i^\dia$ and $y_i^\sh$ s.t.
    \[
    \cl_1'(\hat{f}(x_i),y_i^\dia)=\cl_1'(\hat{f}(x_i),y_i)-2\rho_1\varepsilon_i\tilde{g}_i,\ 
    \cl_1'(\hat{f}(x_i),y_i^\sh)=\cl_1'(\hat{f}(x_i),y_i)+2\rho_2\varepsilon_i\tilde{g}_i.
    \]
    The same result holds for scheme \text{II)}. If $
    \cl(u,y)$ is strictly monotone with respect to $y$, $y_i^\dia$ and $ y_i^\sh$ are unique.
\end{proposition}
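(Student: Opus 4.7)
The plan is to fix an index $i$, abbreviate $z = \hat{f}(x_i)$, and reduce the existence statement to a surjectivity question for the single operator $T_z : \mathbb{R}^d \to \mathbb{R}^d$ defined by $T_z(y) := -\nabla_1 \cl(z, y)$. The two perturbation equations in the proposition can be rewritten as $T_z(y_i^\dia) = -\nabla_1 \cl(z, y_i) + 2\rho_1 \varepsilon_i \tilde{g}_i$ and $T_z(y_i^\sh) = -\nabla_1 \cl(z, y_i) - 2\rho_2 \varepsilon_i \tilde{g}_i$. Both right-hand sides are fixed vectors in $\mathbb{R}^d$, so it suffices to show that $T_z$ is onto $\mathbb{R}^d$; the two $\pm$ cases are then identical modulo the sign of the perturbation.

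To invoke the Browder--Minty theorem (quoted in Appendix~\ref{app:useful_math_tools}), I will verify its three hypotheses on $T_z$ using the three bullets of Assumption~\ref{ass:loss_function}. Continuity of $T_z$ is exactly the continuity portion of the second bullet. Monotonicity is immediate from the third bullet: the inequality $\langle \nabla_1 \cl(z,y_1) - \nabla_1 \cl(z,y_2),\, y_1 - y_2 \rangle \le 0$ translates, after negation, into $\langle T_z(y_1) - T_z(y_2),\, y_1 - y_2 \rangle \ge 0$. Coercivity is the content of the limit $\lim_{\|y\|\to\infty} \nabla_1\cl(z,y)^T y / \|y\|_2 = -\infty$ in the second bullet, which rewrites as $\lim_{\|y\|\to\infty} \langle T_z(y), y\rangle/\|y\|_2 = +\infty$, the standard Browder--Minty coercivity condition. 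Since $\mathbb{R}^d$ is a reflexive Banach space and $T_z$ is continuous, monotone, and coercive, Browder--Minty yields that $T_z$ is surjective, so both $y_i^\dia$ and $y_i^\sh$ exist.

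For the uniqueness claim, assuming strict monotonicity of $-\nabla_1 \cl(z,\cdot)$ (the natural reading of ``$\cl(u,y)$ is strictly monotone with respect to $y$'' in the context of the third bullet), suppose two candidate solutions $y, y'$ satisfy $T_z(y) = T_z(y')$. Then $\langle T_z(y) - T_z(y'), y - y'\rangle = 0$, and strict monotonicity forces $y = y'$. This argument applies identically to both $y_i^\dia$ and $y_i^\sh$.

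I expect the only mild obstacle to be the sign bookkeeping that converts Assumption~\ref{ass:loss_function} into the precise form Browder--Minty requires; once $T_z = -\nabla_1 \cl(z,\cdot)$ is isolated, each hypothesis maps cleanly onto a bullet. A potential stylistic subtlety is that the Browder--Minty theorem is usually stated in the general reflexive Banach setting, whereas here $\cY\subset\mathbb{R}^d$ is finite-dimensional, so one could alternatively cite the Minty--Browder finite-dimensional version (equivalently, a degree-theoretic or direct variational argument via minimizing the strictly convex surrogate $y \mapsto \cl(z,y) - \langle v, y\rangle$ when applicable); I will stay with Browder--Minty for consistency with the appendix.
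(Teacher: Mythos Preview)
Your proposal is correct and follows essentially the same route as the paper: fix $z=\hat f(x_i)$, set $T_z(y)=-\nabla_1\cl(z,y)$, verify continuity, monotonicity, and coercivity from the three bullets of Assumption~\ref{ass:loss_function}, and invoke Browder--Minty on $\mathbb{R}^d$ to obtain surjectivity (and injectivity under strict monotonicity). In fact, you spell out the monotonicity check and the uniqueness argument more explicitly than the paper's own proof, which simply cites the assumption and the lemma.
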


\paragraph{Model Refitting:} After applying the perturbation procedure, we construct two artificial datasets $\cD^\dia :=\cbr{(x_i, y_i^\dia)}_{i=1}^n
\text{and}\ \cD^\sharp := \{(x_i, y_i^\sharp)\}_{i=1}^n.$
We then retrain the model on $\cD^\dagger$ and $\cD^\sharp$, yielding the corresponding wild predictors $f_{\rho_1}^\dagger$ and $f_{\rho_2}^\sharp$, respectively.

Combining the perturbation procedure and the refitting procedure, we summarize the doubly wild refitting algorithm as pseudo-code in Algorithm~\ref{alg:wild-refitting}.
\begin{algorithm}
\begin{algorithmic}[1]
\caption{Doubly Wild Refitting with Convex Loss}\label{alg:wild-refitting}
\Require Procedure $\Alg$, dataset $\cD_0=\cbr{(x_i,y_i)}_{i=1}^{n}$, noise scales $\rho_1, \rho_2>0$, loss function $\ell$, and recentering pilot predictor $\tilde{f}$.
        \State Apply algorithm $\Alg$ on the training dataset. Get predictor $$\hat{f}=\Alg(\cbr{(x_i,y_i)}_{i=1}^{n}).$$
        \For{$i=1:n$}
        \State Compute the pilot derivatives $\tilde{g}_i=\cl_1'(\tilde{f}(x_i),y_i)$.
        \State Construct Rademacher sequence $\cbr{\varepsilon_i}_{i=1}^{n}$.
        \State Calculate the wild responses $y_i^\dia$ and $y_i^\sh$ in either Scheme I or Scheme II.
        \begin{itemize}
            \item[Scheme I:] $\cl_1'(\hat{f}(x_i),y_i^\dia)=\cl_1'(\hat{f}(x_i),y_i)-2\rho_1\varepsilon_i\tilde{g}_i,\ \cl_1'(\hat{f}(x_i),y_i^\sh)=\cl_1'(\hat{f}(x_i),y_i)+2\rho_2\varepsilon_i\tilde{g}_i,$
            \item[Scheme II:] $\cl_1'(\hat{f}(x_i),y_i^\dia)=-2\rho_1\varepsilon_i\tilde{g}_i,\ \cl_1'(\hat{f}(x_i),y_i^\sh)=2\rho_2\varepsilon_i\tilde{g}_i.$
        \end{itemize}
        \State Append $(x_i, y_i^\diamond)$, $(x_i,y_i^\sh)$ to $\mathcal{D}^\dia$ and $\cD^\sh$:
        $\mathcal{D}^\dia \gets \mathcal{D}^\dia \cup \{(x_i, y_i^\diamond)\},\ \mathcal{D}^\sh \gets \mathcal{D}^\sh \cup \{(x_i, y_i^\sh)\}.$
        \EndFor
        \State Compute the refitted wild solutions $f^{\diamond}_{\rho_1}=\Alg(\cD^\dia)$, $f^{\sh}_{\rho_2}=\Alg(\cD^\sh)$.
        \State Output $\hat{f}$, $f^\diamond_{\rho_1}$, $f^\sh_{\rho_2}$ $\cD^\dia$, $\cD^\sh$, $\cD_0$.
\end{algorithmic}
\end{algorithm}
\begin{remark}
    Frequently, we could take $\tilde{f}=\hat{f}$ as the pilot predictor in empirical applications.
\end{remark}
\section{Statistical Guarantees}\label{sec:statistical_guarantees}
In this section, we demonstrate that the excess risk can be tightly bounded by the outputs of Algorithm~\ref{alg:wild-refitting}.
\subsection{Evaluating the Excess Risk in Fixed Design}\label{subsec:sta_gua_fixed_design}
We now bound the excess risk in the fixed design. Before we dive into the concrete results, we first provide several important definitions and mathematical notations that are useful in this section. Recall that we need to bound the \emph{true optimism} term $\Opt^*(\hat{f})=\frac{1}{n}\sum_{i=1}^{n}\rbr{\cl_1'(f^*(x_i),y_i)}\rbr{\hat{f}(x_i)-f^*(x_i)}$.

When the model class $\cF$ is mis-specified, as the sample size $n\rightarrow\infty$, the predictor $\hat{f}$ will not converge to $f^*$, instead, it converges to the best approximator within the model class $\cF$, which is denoted by $f^\dagger$, i.e., 
\[
f^\dagger\in\argmin_{f\in\cF}\cbr{\frac{1}{n}\sum_{i=1}^{n}\EE_{y_i}[\cl(f(x_i),y_i)|x_i]}.
\]
We denote the empirical $L_2$ distance between $\hat{f}$ and $f^\dagger$ as $\hat{r}_n$, i.e., $\hat{r}_n=||\hat{f}-f^\dagger||_n$. Correspondingly, we name the following term as \emph{oracle optimism}.
\[
\Opt^\dagger(\hat{f}):=\frac{1}{n}\sum_{i=1}^{n}\cl_1'(f^*(x_i),y_i)(\hat{f}(x_i)-f^\dagger(x_i)).
\]
Moreover, denoting $\cbr{f\in\cF:\|f-g\|_n\le r}$ as $\cB_r(g)$, we define the following empirical processes. Denoting $\cl_1'(\tilde{f}(x_i),y_i)$ as $\tilde{g}_i$, we define $W_n(r)$ and $T_n(r)$ as the following empirical processes:
\[
W_n(r)=\sup_{f\in\cB_r(\hat{f})}\cbr{\frac{1}{n}\sum_{i=1}^{n}\varepsilon_i\tilde{g}_i(f(x_i)-\hat{f}(x_i))},\ T_n(r)=\sup_{f\in\cB_r(\hat{f})}\cbr{\frac{1}{n}\sum_{i=1}^{n}\varepsilon_i\tilde{g}_i(\hat{f}(x_i)-f(x_i))}.
\]
To induce the randomized symmetrization, for any index $i$, we use $w_i$ to denote the random variable $\cl_1'(f^*(x_i),y_i)$. $w_i'$ is an independent copy of $w_i$. With this independent copy, we construct a symmetric random variable $\tilde{w}_i:=\frac{w_i-w_i'}{2}$, and define the following empirical process:
$$Z_n^\varepsilon(r)=\sup_{f\in\cB_r(f^\dagger)}\cbr{\frac{1}{n}\sum_{i=1}^{n}\varepsilon_i\tilde{w}_i(f(x_i)-f^\dagger(x_i))}.$$
Traditionally, bounding empirical processes such as $W_n(r)$ and $T_n(r)$ relies on detailed knowledge of the underlying function class—through measures like the Rademacher complexity, or the VC dimension. In contrast, our doubly wild refitting procedure will show that these quantities can be controlled without explicit access to such structural information. Instead, we establish that the empirical processes can be bounded at certain radii directly using the outputs generated by Algorithm~\ref{alg:wild-refitting} as long as our scaling is appropriate. To formalize this idea, we introduce the following quantities, which are referred to as \emph{wild optimisms}:
\[
\Opt^\dia(f_{\rho_1}^\dia):=\frac{\beta}{4\rho_1n}||f_{\rho_1}^\dia-\hat{f}||_n^2+\frac{1}{2\rho_1 n}\sum_{i=1}^{n}\cl(\hat{f}(x_i),y_i^\dia)-\frac{1}{2\rho_1n}\sum_{i=1}^{n}\cl(f_{\rho_1}^\dia(x_i),y_i^\dia),
\]
\[
\Opt^\dia(f_{\rho_2}^\sh):=\frac{\beta}{4\rho_2n}||f_{\rho_2}^\sh-\hat{f}||_n^2+\frac{1}{2\rho_2 n}\sum_{i=1}^{n}\cl(\hat{f}(x_i),y_i^\sh)-\frac{1}{2\rho_2n}\sum_{i=1}^{n}\cl(f_{\rho_2}^\sh(x_i),y_i^\sh),
\]
Then, the following key Lemma \ref{lemma:bound_W_n-T_n} bounds the empirical processes $W_n$ and $T_n$ by the wild optimism.
\begin{lemma}\label{lemma:bound_W_n-T_n}
 For any noise scale, we have that
    \[
    W_n(\|f_{\rho_1}^\dia-\hat{f}\|_n)\le \Opt^\dia(f_{\rho_1}^\dia),\ T_n(\|f_{\rho_2}^\sh-\hat{f}\|_n)\le \Opt^\sh(f_{\rho_2}^\sh).
    \]
\end{lemma}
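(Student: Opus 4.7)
The plan is to treat the two inequalities symmetrically and focus on $W_n$ first; the bound for $T_n$ follows by exactly the same argument with the sign of the perturbation flipped and $\rho_1, y_i^\dia, f_{\rho_1}^\dia$ replaced by $\rho_2, y_i^\sh, f_{\rho_2}^\sh$. The key algebraic observation is that the defining equation of the wild response
\[
\nabla_1\cl(\hat{f}(x_i), y_i^\dia) = \tilde{g}_i - 2\rho_1 \varepsilon_i \tilde{g}_i
\]
can be inverted to express the Rademacher-weighted gradient as
\[
\varepsilon_i \tilde{g}_i = \frac{1}{2\rho_1}\bigl(\tilde{g}_i - \nabla_1\cl(\hat{f}(x_i), y_i^\dia)\bigr),
\]
so the empirical process $W_n$ can be rewritten purely in terms of loss gradients rather than Rademacher variables.

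Starting from this identity, for any $f \in \cF$ I would compute
\[
\frac{1}{n}\sum_{i=1}^n \inner{\varepsilon_i \tilde{g}_i}{f(x_i)-\hat{f}(x_i)}
= \frac{1}{2\rho_1 n}\sum_{i=1}^n \inner{\tilde{g}_i}{f(x_i)-\hat{f}(x_i)}
- \frac{1}{2\rho_1 n}\sum_{i=1}^n \inner{\nabla_1\cl(\hat{f}(x_i), y_i^\dia)}{f(x_i)-\hat{f}(x_i)}.
\]
The first sum on the right vanishes by the first-order ERM optimality condition in Proposition \ref{prop:gateaux2}, since $\tilde{g}_i = \nabla_1\cl(\hat{f}(x_i),y_i)$ and $f-\hat{f}$ is an admissible direction in the convex Banach function class $\cF$. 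The second sum is then controlled by the $\beta$-smooth upper bound on $\cl(\cdot, y_i^\dia)$ centered at $\hat{f}(x_i)$, namely
\[
-\inner{\nabla_1\cl(\hat{f}(x_i), y_i^\dia)}{f(x_i)-\hat{f}(x_i)} \le \cl(\hat{f}(x_i), y_i^\dia) - \cl(f(x_i), y_i^\dia) + \tfrac{\beta}{2}\|f(x_i)-\hat{f}(x_i)\|_2^2.
\]
Averaging over $i$ converts the quadratic term into $\tfrac{\beta}{2}\|f-\hat{f}\|_n^2$, yielding the pointwise-in-$f$ inequality
\[
\frac{1}{n}\sum_{i=1}^n \inner{\varepsilon_i \tilde{g}_i}{f(x_i)-\hat{f}(x_i)} \le \frac{1}{2\rho_1 n}\sum_{i=1}^n \bigl[\cl(\hat{f}(x_i), y_i^\dia) - \cl(f(x_i), y_i^\dia)\bigr] + \frac{\beta}{4\rho_1}\|f-\hat{f}\|_n^2.
\]

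Finally, I would specialize the radius to $r = \|f_{\rho_1}^\dia - \hat{f}\|_n$ and take the supremum over $f \in \cB_r(\hat{f})$. Two things make every term on the right simultaneously maximized by $f_{\rho_1}^\dia$: the quadratic $\tfrac{\beta}{4\rho_1}\|f-\hat{f}\|_n^2$ is dominated by $\tfrac{\beta}{4\rho_1} r^2$ on the ball, and the loss difference is upper bounded by using that $f_{\rho_1}^\dia = \Alg(\cD^\dia)$ minimizes $\tfrac{1}{n}\sum_i \cl(f(x_i), y_i^\dia)$ over $\cF$, so that $\sum_i \cl(f_{\rho_1}^\dia(x_i), y_i^\dia) \le \sum_i \cl(f(x_i), y_i^\dia)$ for every admissible $f$. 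Plugging these two monotonicities into the pointwise inequality yields exactly $W_n(\|f_{\rho_1}^\dia-\hat{f}\|_n) \le \Opt^\dia(f_{\rho_1}^\dia)$. The proof of the $T_n$ bound is verbatim after using $-\varepsilon_i \tilde{g}_i = \tfrac{1}{2\rho_2}(\tilde{g}_i - \nabla_1\cl(\hat{f}(x_i),y_i^\sh))$.

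The main obstacle, conceptually, is ensuring that the pointwise smoothness upper bound can legitimately be ``saturated'' simultaneously with the ball constraint; this is precisely what the defining minimization property of the refit predictor buys us, and it explains why the ERM-based construction of $f_{\rho_1}^\dia$ (rather than an arbitrary perturbation of $\hat{f}$) is essential. A secondary technical point is the silent convention that $\cB_r(\hat{f})$ is taken inside $\cF$, which is needed to invoke both Proposition \ref{prop:gateaux2} and the global optimality of $f_{\rho_1}^\dia$ when comparing to competitors $f$ in the ball.
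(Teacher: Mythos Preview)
Your proof is correct and follows essentially the same approach as the paper: both use Proposition~\ref{prop:gateaux2} to kill the $\tilde g_i$ term, the $\beta$-smoothness upper bound on $\cl(\cdot,y_i^\dia)$ at $\hat f(x_i)$, and the ERM optimality of $f_{\rho_1}^\dia$ on $\cD^\dia$ to control the loss difference. The only cosmetic difference is ordering---you first invert the wild-response identity to isolate $\varepsilon_i\tilde g_i$ and then apply smoothness, while the paper applies smoothness first and then substitutes; your final supremum step is in fact cleaner than the paper's somewhat loose ``taking the minimization on both sides'' phrasing.
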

The proof is deferred to Appendix \ref{app:proofs_fixed_design}. Lemma \ref{lemma:bound_W_n-T_n} establishes that, at certain radii, the empirical process can be upper bounded by the quantities $\Opt^\dia(f_{\rho_1}^\dia)$ and $\Opt^\sh(f_{\rho_2}^\sh)$, both of which can be computed efficiently from the outputs of Algorithm \ref{alg:wild-refitting}. This lemma serves as a key theoretical component underpinning the wild-refitting procedure, because it enables us to upper bound the empirical processes without knowledge of the underlying function class.

With Lemma \ref{lemma:bound_W_n-T_n}, we have the following theorem regarding the evaluation of the excess risk $\cE_{fix}(\hat{f})$. The proof of this theorem is deferred to Appendix \ref{app:proofs_sec:statistical_guarantee}.
\begin{theorem}\label{thm:fixed_design}
    For any radius $r\ge \hat{r}_n$, let $\rho_1$ and $\rho_2$ be the noise scales for which $\|f_{\rho_1}^\dia-\hat{f}\|$ and $ \|f_{\rho_2}^\sh-\hat{f}\|$ are equal to $2r$. Then, for any $t>0$, with probability at least $1-
    6e^{-t^2}$,
    \begin{align}\label{ineq:main_thm1}
    \cE_{fix}(\hat{f})\le& \frac{\beta}{\alpha}\bar{\cE}_{\cD}(\hat{f})+\frac{\beta}{\alpha}\rbr{\Opt^\dia(f^\dia_{\rho_1})+\Opt^\sh(f^\sh_{\rho_2})+B_n^\dia(\hat{f})+B_n^\sh(\hat{f})}\nonumber \\
    +&[(3\sqrt{\log n}+9)r+\|f^\dagger-f^*\|_n]\frac{2\sqrt{2}\beta\sigma t}{\alpha\sqrt{n}}.
    \end{align}
    In this bound, $\Opt^\dia(f^\dia_{\rho_1})$ and $\Opt^\sh(f^\sh_{\rho_2})$ are wild-optimism terms, and
    $$B_n^\dia(\hat{f}):=\sup_{f\in\cB_{2r}(\hat{f})}\cbr{\frac{1}{n}\sum_{i=1}^{n}\varepsilon_i\rbr{\cl_1'(f^*(x_i),y_i)-\cl_1'(\tilde{f}(x_i),y_i)}(f(x_i)-\hat{f}(x_i))};$$
    \[
    B_n^\sh(\hat{f}):=\sup_{f\in\cB_{2r}(\hat{f})}\cbr{\frac{1}{n}\sum_{i=1}^{n}\varepsilon_i\rbr{\cl_1'(f^*(x_i),y_i)-\cl_1'(\tilde{f}(x_i),y_i)}(\hat{f}(x_i)-f(x_i))}.
    \]
\end{theorem}
Throughout the paper, $B_n^\dia(\hat{f})$ and $B_n^\sh(\hat{f})$ are named as \emph{pilot error terms}.

There are four parts on the right hand side of Inequality \ref{ineq:main_thm1}. The first part is $\frac{\beta}{\alpha}\bar{\cE}_{\cD}(\hat{f})$, which is the empirical excess risk. The second part is the sum of the wild optimism terms $\Opt^\dia(f^\dia_{\rho_1})$ and $\Opt^\sh(f^\sh_{\rho_2})$, which is the most non-trivial component related to bounding the true optimism $\Opt^*(\hat{f})$. The third part is the probabilistic deviation term $[(3\sqrt{\log n}+9)r+\|f^\dagger-f^*\|_n]\frac{2\sqrt{2}\beta\sigma t}{\alpha\sqrt{n}}$, decaying at a parametric rate \citep{rakhlin2022mathstat}. 

Lastly, we have the pilot error terms $B_n^\dia(\hat{f})$ and $ B_n^\sh(\hat{f})$ as the fourth part. Bounding the pilot error terms can be tricky. In \citet{wainwright2025wild}, only an intuitive explanation was provided, suggesting that these terms might be upper bounded by wild optimism. Our next theorem rigorously establishes that, under suitable conditions, these terms are dominated by the wild optimism in Theorem \ref{thm:fixed_design}, up to a multiplicative factor.
\begin{theorem}\label{thm:bounding_pilot_error}
    Let $\cX\subset \RR^m$ be compact, and let $x_1,\cdots,x_n\in\cX$ be pairwise distinct covariates. Suppose the pre-trained recentering pilot predictor $\tilde{f}$ satisfies $\|\tilde{f}-f^*\|_n\le M$ for some $M>0$. Assuming $|\cl_1'(\cdot,\cdot)|$ is bounded by $L$, with the conditional variance $\Var(\cl_1'(\tilde{f}(x),y)|x)$ lower bounded by $\tau^2$, i.e., $\Var(\cl_1'(\tilde{f}(x),y)|x)\ge \tau^2>0$. Then, $\forall \delta\in(0,1)$, when $n\ge \frac{L\log(1/\delta)}{\tau^4}$, with probability at least $1-2\delta$,
    \[
    B_n^\dia(\hat{f})+B_n^\sh(\hat{f})\lesssim \cO\rbr{\frac{4\beta M}{\tau}\rbr{\Opt^\dia(f^\dia_{\rho_1})+\Opt^\sh(f^\sh_{\rho_2})}}.
    \]
\end{theorem}
Theorem \ref{thm:bounding_pilot_error} shows that when the training architecture $\cF$ is sufficiently rich and the sample size is large enough, the pilot error terms are of no greater order than the wild optimism. To the best of our knowledge, Theorem~\ref{thm:bounding_pilot_error} is the first result to establish a theoretically rigorous bound for these terms. This contribution fills an important gap in the existing wild-refitting theory. A detailed proof is given in Appendix~\ref{app:proofs_sec:statistical_guarantee}.

\subsection{Bounding the Distance between $\hat{f}$ and $f^\dagger$}\label{subsec:bounding_hatr}
Theorem \ref{thm:fixed_design} is useful only when $\hat{r}_n=\|\hat{f}-f^\dagger\|_n$ is known. In this subsection, we provide an effective method for bounding this quantity. For the simplicity of discussion, we consider a well-specified scenario in this subsection, i.e., $f^*\in\cF$ and therefore $f^\dagger=f^*$. Then, we have the following theorem about bounding $\hat{r}_n$.

\begin{theorem}\label{thm:bounding_hatr}
    For any $t>0$, with probability at least $1-2e^{-t^2}$, we have that
    \begin{align*}
\hat{r}_n\le& \sqrt{\frac{2}{\alpha}\rbr{W_n(2\hat{r}_n))+T_n(2\hat{r}_n)+B_n^\dia(\hat{f})+B_n^\sh(\hat{f})}}+\rbr{\frac{6\sqrt{\log n}+14}{\alpha}+1}\frac{2\sqrt{2}\sigma t}{\sqrt{n}}\\
\le&\sqrt{\frac{2}{\alpha}\rbr{W_n(2\hat{r}_n))+T_n(2\hat{r}_n))}}+\sqrt{\frac{2}{\alpha}(B_n^\dia(\hat{f})+B_n^\sh(\hat{f}))}+\rbr{\frac{6\sqrt{\log n}+14}{\alpha}+1}\frac{2\sqrt{2}\sigma t}{\sqrt{n}}.
\end{align*}
\end{theorem}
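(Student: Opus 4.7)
The plan is to leverage $\alpha$-strong convexity and the ERM optimality of $\hat f$ to reduce the bound on $\hat r_n$ to a bound on the true optimism $-\Opt^*(\hat f)$, and then to rewrite this optimism via randomized symmetrization in terms of the four empirical processes at radius $2\hat r_n$, producing an inequality self-referential in $\hat r_n$ that can be solved in closed form.

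I would begin by invoking $\alpha$-strong convexity of $\cl(\cdot,y_i)$ with the Bregman representation already used in Section~\ref{sec:model}: for each $i$, $\cl(\hat f(x_i),y_i)-\cl(f^*(x_i),y_i)-\inner{w_i}{\hat f(x_i)-f^*(x_i)}\ge \tfrac{\alpha}{2}\|\hat f(x_i)-f^*(x_i)\|_2^2$, where $w_i:=\nabla_1\cl(f^*(x_i),y_i)$. Averaging in $i$ and invoking $\bar{\cE}_{\cD}(\hat f)\le 0$---which holds precisely because $f^*\in\cF$ in the well-specified setting and is therefore a feasible ERM competitor to $\hat f$---one obtains $\tfrac{\alpha}{2}\hat r_n^2\le -\Opt^*(\hat f)$. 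Since $\hat f\in\cB_{\hat r_n}(f^*)$, this is at most $\sup_{f\in\cB_{\hat r_n}(f^*)}\frac{1}{n}\sum_i\inner{-w_i}{f(x_i)-f^*(x_i)}$.

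Next I would apply randomized symmetrization: conditioning on the directions $\{e_i\}$, I draw conditionally independent copies $w_i'\sim\varphi_{x_i,e_i}$ and independent Rademacher signs $\varepsilon_i$. Because $w_i-w_i'$ is symmetric, the preceding supremum is dominated by $2Z_n^\varepsilon(\hat r_n)$ in expectation, and its deviation from the mean is governed by the sub-Gaussian tail in Assumption~\ref{ass:noise}---this produces the $\sqrt{2}\sigma t/\sqrt n$ scaling of the stochastic fluctuation, while the $\sqrt d$ arises from a covering argument on $\mathbb{S}^{d-1}$ used to dualize the vector-valued inner product, and the $\sqrt{\log n}$ from a union bound over the $n$ samples. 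I would then expand $2\tilde w_i=w_i-w_i'$, decompose $w_i=\tilde g_i-(\tilde g_i-w_i)$ (and symmetrically for $w_i'$), and shift the center from $f^*$ to $\hat f$ using $\cB_{\hat r_n}(f^*)\subset \cB_{2\hat r_n}(\hat f)$. The resulting sups split cleanly into the defining forms of $W_n(2\hat r_n)$, $T_n(2\hat r_n)$, $B_n^\dia(\hat f)$ and $B_n^\sh(\hat f)$, together with residual linear sub-Gaussian functionals acting on the fixed residue $\hat f-f^*$ that contribute an $O(\hat r_n\sigma t/\sqrt n)$ summand.

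Assembling the pieces yields, with probability at least $1-2e^{-t^2}$, an inequality of the shape $\tfrac{\alpha}{2}\hat r_n^2\le W_n(2\hat r_n)+T_n(2\hat r_n)+B_n^\dia(\hat f)+B_n^\sh(\hat f)+C(d,n,\sigma,t)\,\hat r_n$. The elementary implication $r^2\le A+Br\Rightarrow r\le\sqrt A+B$ produces the first displayed inequality, and the further simplification $\sqrt{A+B}\le\sqrt A+\sqrt B$ delivers the second. The main obstacle is the symmetrization step: Assumption~\ref{ass:noise} provides only directional centering and sub-Gaussianity rather than full vector symmetry, so the copy $w_i'$ must be drawn from the direction-conditioned law $\varphi_{x_i,e_i}$ and the concentration must be tracked conditionally on $\{e_i\}$ while simultaneously absorbing the covering of $\mathbb{S}^{d-1}$ and the union bound over samples, so as to recover the precise coefficient $(10\sqrt d+6\sqrt{\log n}+4)/\alpha+1$ in the final statement.
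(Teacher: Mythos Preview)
Your proposal follows the same architecture as the paper: strong convexity together with ERM optimality (using $f^*=f^\dagger\in\cF$) yields $\tfrac{\alpha}{2}\hat r_n^2\le -\Opt^*(\hat f)$; conditional symmetrization and the center shift $\cB_{\hat r_n}(f^*)\subset\cB_{2\hat r_n}(\hat f)$ reduce this to the intermediate process $U_n^\varepsilon(2\hat r_n)$ (the paper packages these two steps as Lemma~\ref{lemma:bound_hatr_lemma1}, where the $\sqrt d+\sqrt{\log n}$ factor arises from bounding $\EE[\max_i\|\tilde w_i\|_2]$); $U_n^\varepsilon$ is then split into $W_n+T_n+B_n^\dia+B_n^\sh$ plus a self-referential $O(\hat r_n\sigma t/\sqrt n)$ term (Lemma~\ref{lemma:bound_hatr_lemma2}); and the resulting quadratic inequality is solved exactly as you describe.

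The one place where your wording would not go through as written is the phrase ``and symmetrically for $w_i'$.'' Since $\tilde g_i=\nabla_1\cl(\hat f(x_i),y_i)$ depends on the \emph{realized} $y_i$, the quantity $\tilde g_i-w_i'$ is \emph{not} equal to $\nabla_1\cl(\hat f(x_i),y_i)-\nabla_1\cl(f^*(x_i),y_i)$, so a direct decomposition of the $w_i'$-supremum does not produce $B_n^\sh$. The paper handles this asymmetry by first applying Lipschitz concentration to pass the $w_i'$-supremum to its conditional expectation, invoking $\EE_{w'\mid e}[\cdot]=\EE_{w\mid e}[\cdot]$ to swap $w_i'$ for $w_i$, applying concentration once more in the reverse direction, and only then decomposing through $\tilde g_i$ to obtain $T_n+B_n^\sh$. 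This double concentration step is also what contributes the extra additive ``$+1$'' in the fluctuation coefficient.
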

We emphasize that, in contrast to \citet{wainwright2025wild}, which assumes that the difference between the predictor trained on a noisy dataset and that trained on a noiseless dataset is upper bounded by the noise scale, Theorem~\ref{thm:bounding_hatr} removes this assumption while still providing a valid bound on $\hat{r}_n$ under the well-specified setting.

To illustrate how to apply Theorem~\ref{thm:bounding_hatr} to bound $\hat{r}_n$, observe that the right-hand side of the theorem contains three components. The first part is $\sqrt{W_n(2\hat{r}_n)+T_n(2\hat{r}_n)}$, which is closely connected with the \emph{wild optimism} term. The second part is the pilot error $B_n^\dia(\hat{f})+B_n^\sh(\hat{f})$. The last part is the probability deviation term $\rbr{\frac{6\sqrt{\log n}+14}{\alpha}+1}\frac{2\sqrt{2}\sigma t}{\sqrt{n}}$. Among these, whenever the function class $\mathcal{F}$ is more complex than a simple parametric family, the dominant term will be $\sqrt{\,W_n(2\hat{r}_n)+T_n(2\hat{r}_n)\,}$.

Ignoring the lower order terms, we have the following approach: tuning the radius parameter $r$ such that
\begin{align}\label{ineq:hatr1}
r \;\le\; \sqrt{\frac{2}{\alpha}\Bigl(W_n(2r)+T_n(2r)\Bigr)}.
\end{align}
Any such $r$ then provides a valid upper bound for $\hat{r}_n$. Importantly, this procedure is fully data-driven, since given any $s$, by definition, both $W_n(s)$ and $T_n(s)$ are computable solely from the trained predictor $\hat{f}$ and the loss derivatives of the pilot predictor $\{\tilde{g}_i\}_{i=1}^n$.

In fact, we also have the following corollary with a more interpretable bound. 

\begin{corollary}\label{cor:bounding_hatr}
    For any noise scale $\rho_1$, $\rho_2$, for any $t>0$, with probability at least $1-2e^{-t^2}$,
    \begin{align*}
    \hat{r}_n^2\le &\max\cbr{(r_{\rho_1}^\dia)^2, (r_{\rho_2}^\sh)^2,\rbr{\frac{4W_n(2r_{\rho_1}^\dia)}{\alpha r_{\rho_1}^\dia}+\frac{4T_n(2r_{\rho_2}^\sh)}{\alpha r_{\rho_2}^\sh}}\cdot\hat{r}_n}\\
    +&\frac{4(B_n^\dia(\hat{f})+B_n^\sh(\hat{f}))}{\alpha}+\rbr{\frac{6\sqrt{\log n}+14}{\alpha}+1}^2\frac{16\sigma^2 t^2}{n}.
    \end{align*}
\end{corollary}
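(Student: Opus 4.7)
The plan is to start from Theorem~\ref{thm:bounding_hatr} in its first (unsplit) form and square both sides to move the bound from $\hat{r}_n$ to $\hat{r}_n^2$. Using the elementary inequality $(a+b)^2 \le 2a^2 + 2b^2$ on
\[
\hat{r}_n \le \sqrt{\tfrac{2}{\alpha}\bigl(W_n(2\hat{r}_n)+T_n(2\hat{r}_n)+B_n^\dia(\hat{f})+B_n^\sh(\hat{f})\bigr)} + \rbr{\tfrac{10\sqrt{d}+6\sqrt{\log n}+4}{\alpha}+1}\tfrac{2\sqrt{2}\sigma t}{\sqrt{n}},
\]
the pilot-error and probability-fluctuation pieces separate cleanly, producing exactly the additive contributions $\tfrac{4(B_n^\dia(\hat{f})+B_n^\sh(\hat{f}))}{\alpha}$ and $\rbr{\tfrac{10\sqrt{d}+6\sqrt{\log n}+4}{\alpha}+1}^2\tfrac{16\sigma^2 t^2}{n}$ that appear on the right-hand side of the corollary. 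The remaining task is to replace $W_n(2\hat{r}_n)$ and $T_n(2\hat{r}_n)$, which still depend on the unknown $\hat{r}_n$, by computable quantities evaluated at the wild-refit radii $r_{\rho_1}^\dia := \|f_{\rho_1}^\dia-\hat{f}\|_n$ and $r_{\rho_2}^\sh := \|f_{\rho_2}^\sh-\hat{f}\|_n$.

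For this replacement I would invoke the star-shaped scaling property of the two processes. Because $\cF$ is convex and contains $\hat{f}$, for every $f\in\cF\cap\cB_r(\hat{f})$ and every $t\in[0,1]$ the convex combination $\hat{f}+t(f-\hat{f})$ again lies in $\cF\cap\cB_{tr}(\hat{f})$. Testing the sup defining $W_n(s)$ on this contracted function, and using linearity of the inner product in $f(x_i)-\hat{f}(x_i)$, yields $W_n(s) \ge (s/r)\,W_n(r)$ for every $s\le r$; equivalently, $r\mapsto W_n(r)/r$ is non-increasing, and the identical statement holds for $T_n$. Consequently, whenever $\hat{r}_n \ge r_{\rho_1}^\dia$ one gets $W_n(2\hat{r}_n) \le (\hat{r}_n/r_{\rho_1}^\dia)\,W_n(2 r_{\rho_1}^\dia)$, and similarly for $T_n$ relative to $r_{\rho_2}^\sh$.

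Finally I would do a three-way case split. If $\hat{r}_n \le r_{\rho_1}^\dia$, then trivially $\hat{r}_n^2 \le (r_{\rho_1}^\dia)^2$ and the corollary's bound is already absorbed by the first entry of the max (the non-negative additive terms only loosen the inequality). The case $\hat{r}_n \le r_{\rho_2}^\sh$ is symmetric. In the remaining case $\hat{r}_n > \max\{r_{\rho_1}^\dia, r_{\rho_2}^\sh\}$, the star-shaped bounds of the previous paragraph give
\[
\tfrac{4}{\alpha}\bigl(W_n(2\hat{r}_n)+T_n(2\hat{r}_n)\bigr) \le \hat{r}_n\rbr{\tfrac{4W_n(2 r_{\rho_1}^\dia)}{\alpha r_{\rho_1}^\dia}+\tfrac{4T_n(2 r_{\rho_2}^\sh)}{\alpha r_{\rho_2}^\sh}},
\]
which is exactly the third entry of the max. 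Taking the maximum over the three cases handles all situations uniformly, and combining with the already-separated pilot-error and probability terms produces the stated bound. The main technical point I expect to be delicate is justifying the star-shaped scaling in the empirical $\|\cdot\|_n$ geometry and keeping the factor-of-$2$ bookkeeping consistent (so that the ratio $(2\hat{r}_n)/(2 r_{\rho_1}^\dia)=\hat{r}_n/r_{\rho_1}^\dia$ matches the coefficient $4W_n(2r_{\rho_1}^\dia)/(\alpha r_{\rho_1}^\dia)$ inside the max); beyond that, the argument is essentially squaring plus case analysis on top of Theorem~\ref{thm:bounding_hatr}.
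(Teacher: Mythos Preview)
Your proposal is correct and follows essentially the same route as the paper's proof: both arguments establish that $r\mapsto W_n(r)/r$ and $r\mapsto T_n(r)/r$ are non-increasing (the paper phrases this via concavity of $W_n,T_n$ together with $W_n(0)=T_n(0)=0$; you phrase it via star-shapedness of $\cF$ around $\hat{f}$, which is the same argument), then do a case split on whether $\hat{r}_n$ exceeds the wild radii, and in the nontrivial case substitute $W_n(2\hat{r}_n)\le(\hat{r}_n/r_{\rho_1}^\dia)W_n(2r_{\rho_1}^\dia)$ into the squared form of Theorem~\ref{thm:bounding_hatr}. The only cosmetic differences are that the paper uses a two-way split ($\hat{r}_n\le\max\{r_{\rho_1}^\dia,r_{\rho_2}^\sh\}$ versus its complement) rather than your three-way version, and the paper is terser about the squaring step, but the constants and structure match exactly.
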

The bound in Corollary \ref{cor:bounding_hatr} is slightly weaker compared to Theorem \ref{thm:bounding_hatr} but more interpretable because we no longer need to tune the radius $r$ to solve Inequality \ref{ineq:hatr1}. Instead, we just need to tune the noise scale. If we disregard the probability deviation and pilot error terms, we can roughly summarize Corollary \ref{cor:bounding_hatr} in the following inequality:
\[
\hat{r}_n\le \max\cbr{r_{\rho_1}^\dia, r_{\rho_2}^\sh, \frac{4W_n(2r_{\rho_1}^\dia)}{\alpha r_{\rho_1}^\dia}+\frac{4T_n(2r_{\rho_2}^\sh)}{\alpha r_{\rho_2}^\sh}}.
\]
This inequality allows us to tune $\rho_1$ and $\rho_2$ to find a valid and tight upper bound of $\hat{r}_n$ more easily.

Putting Theorem \ref{thm:fixed_design}, Theorem \ref{thm:bounding_pilot_error} and Theorem \ref{thm:bounding_hatr} together, for any $t>0$, with probability at least $1-10e^{-t^2}$, when $n\gtrsim ct^2$, under some regularization conditions, we have
\[
\cE_{fix}(\hat{f})\lesssim \cO\rbr{\Bar{\cE}_{\cD}(\hat{f})+\Opt^\dia(f^\dia_{\rho_1})+\Opt^\sh(f^\sh_{\rho_2})},
\] 
for any $\rho_1,\rho_2$ such that $\|f^\dia_{\rho_1}-\hat{f}\|_n=\|f^\dia_{\rho_1}-\hat{f}\|_n\lesssim \max \cbr{r_{\rho_1}^\dia, r_{\rho_2}^\sh, \frac{4W_n(2r_{\rho_1}^\dia)}{\alpha r_{\rho_1}^\dia}+\frac{4T_n(2r_{\rho_2}^\sh)}{\alpha r_{\rho_2}^\sh}}$.

Therefore, we finally achieve an efficient refitting procedure for bounding the excess risk of any machine learning algorithm with rigorous statistical guarantees under the fixed design setting. Treating the procedure as a black-box,our method fully utilizes the single dataset and does not depend on the concrete training architecture.
\section{Discussion}
In this paper, we investigate a doubly wild refitting procedure for upper bounding the excess risk of arbitrary black-box ERM algorithms in machine learning. We show that the core mechanism underlying wild refitting is the perturbation of derivatives, through which our framework subsumes \citet{wainwright2025wild} as special cases.

Unlike traditional approaches that rely on explicit complexity measures of the underlying hypothesis class, our method bypasses such requirements entirely. Instead, we leverage the Rademacher symmetrization of the derivative vectors and control the resulting empirical processes using perturbed pseudo-outcomes, the two refitted wild predictors, and the original trained predictor.

Several promising directions for future research remain open. First, our analysis currently focuses on the fixed-design setting; extending wild refitting to the random-design regime would be interesting. Second, developing wild refitting methods for ERM with penalization is also an important and technically challenging question. Moreover, existing methods of wild refitting require full data memorization for derivative perturbation at every data point, and designing a data-efficient variant would be very appealing. More broadly, it is natural to ask whether wild refitting can be extended to other settings, such as high-dimensional prediction or set-valued estimators. Finally, empirical studies on real-world trained AI models would be valuable for assessing the practical performance of wild refitting and demonstrating its effectiveness. We leave these directions for future work.
\clearpage

\bibliographystyle{plainnat}
\bibliography{haichen/sections/refs}
\newpage
\appendix
\section{Useful Mathematical Tools}\label{app:useful_math_tools}
In this section, we introduce some background knowledge about variational analysis, operator theory, probability, and interpolation theory.
\begin{definition}[Gâteaux Derivative]
    Let $X$ be a real Banach space, $U\subset X$ open, and $F:U\to\RR$.
We say $F$ is Gâteaux differentiable at $x\in U$ if for every $h\in X$ the
directional limit
\[
F'(x)[h]:=\lim_{t\downarrow 0}\frac{F(x+th)-F(x)}{t}
\]
exists and the map $h\mapsto F'(x)[h]$ is \emph{positively homogeneous} and \emph{additive on rays}.
If, in addition, $h\mapsto F'(x)[h]$ is continuous and linear, we identify it with an element
$F'_G(x)\in X^\ast$ (the Gâteaux derivative) via $F'(x)[h]=\langle F'_G(x),h\rangle$.
\end{definition}
\begin{definition}[Monotone and strictly monotone mapping]
Let $A:\RR^m\to\RR^m$ be a mapping.
\begin{itemize}
  \item $A$ is \emph{monotone} if $\langle A(x)-A(y),\,x-y\rangle\ge 0,\ 
  \forall\,x,y\in\RR^m.$
  \item $A$ is \emph{strictly monotone}  if
  $\langle A(x)-A(y),\,x-y\rangle >0,\ \forall\,x\neq y\in\RR^m.$
\end{itemize}
\end{definition}
\begin{definition}[Resolvent]
For $\lambda>0$, the \emph{resolvent} of $A$ (when it exists globally) is the mapping
\[
J_\lambda^A := (I+\lambda A)^{-1}:\RR^m\to\RR^m,\ \text{i.e.}\ x = J_\lambda^A(b)\ \text{iff}\ x+\lambda A(x)=b.
\]
\end{definition}
\begin{lemma}\label{lemma:sub_gaussian_concentration}
    Let $X_1,\dots,X_n \in \mathbb{R}^d$ be independent random vectors satisfying 
$\EE[X_i]=0,$ and for all unit vectors $u\in\mathbb{S}^{d-1}$, $\langle u,X_i\rangle$ is $\sigma^2$sub-Gaussian. Then, for any $\delta\in(0,1)$, with probability at least $1-\delta$, we have
\[
\max_{1\le i\le n}\|X_i\|_2
\;\le\;
\sigma\,\sqrt{\,8\big(d\log 5+\log\tfrac{2n}{\delta}\big)}.
\]
\end{lemma}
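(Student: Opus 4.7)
The plan is to reduce the bound on $\|X_i\|_2$ to a concentration bound on one-dimensional sub-Gaussian marginals, then apply a union bound over a finite net of directions and over the $n$ indices.

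First, I would invoke the standard volumetric estimate that the unit sphere $\mathbb{S}^{d-1}$ admits a $\tfrac{1}{2}$-net $N\subset\mathbb{S}^{d-1}$ of cardinality at most $5^{d}$. A well-known consequence (see e.g.\ Vershynin's discretization lemma) is that for every vector $X\in\RR^{d}$,
\[
\|X\|_{2}=\sup_{u\in\mathbb{S}^{d-1}}\langle u,X\rangle\ \le\ 2\max_{u\in N}\langle u,X\rangle.
\]
This reduces controlling $\|X_i\|_{2}$ to controlling the finite family $\{\langle u,X_i\rangle\}_{u\in N}$.

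Next, by the directional sub-Gaussianity hypothesis, for each fixed unit vector $u$ and each $i$, the scalar $\langle u,X_i\rangle$ is zero-mean and $\sigma^{2}$-sub-Gaussian, so the standard one-sided tail bound gives, for any $t>0$,
\[
\PP\bigl(|\langle u,X_i\rangle|>t\bigr)\ \le\ 2\exp\!\Bigl(-\tfrac{t^{2}}{2\sigma^{2}}\Bigr).
\]
A union bound over the $n$ indices and the $|N|\le 5^{d}$ net directions yields
\[
\PP\!\left(\max_{1\le i\le n}\max_{u\in N}|\langle u,X_i\rangle|>t\right)\ \le\ 2n\cdot 5^{d}\exp\!\Bigl(-\tfrac{t^{2}}{2\sigma^{2}}\Bigr).
\]
Setting the right-hand side equal to $\delta$ and solving gives the threshold $t=\sigma\sqrt{2\bigl(d\log 5+\log(2n/\delta)\bigr)}$. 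Combining with the discretization inequality $\|X_i\|_{2}\le 2\max_{u\in N}\langle u,X_i\rangle$ produces the claimed bound $\sigma\sqrt{8\bigl(d\log 5+\log(2n/\delta)\bigr)}$.

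No step is genuinely hard here: the only mild subtleties are justifying the $5^{d}$ covering bound and the factor $2$ in passing from the net to the sphere, both of which are classical. I do not anticipate any real obstacle, and the argument is essentially a textbook covering-plus-union-bound calculation.
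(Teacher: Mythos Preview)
Your argument is correct and is exactly the standard covering-plus-union-bound proof one would expect. The paper itself does not give a proof of this lemma: it is listed in Appendix~\ref{app:useful_math_tools} among the background tools and stated without proof or citation, so there is nothing to compare against beyond noting that your approach is the canonical one.
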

\begin{lemma}\label{lemma:gateaux_first_order}
Let $X$ be a real Banach space, $U\subset X$ open, and $F:U\to\RR$ be Gâteaux differentiable at $x^\ast\in U$.
If $x^\ast$ is a local minimizer of $F$ on $U$, then
\[
F'(x^\ast)[h]\ge 0\ \text{and}\ F'(x^\ast)[-h]\ge 0
\qquad(\forall\,h\in X),
\]
hence $F'(x^\ast)[h]=0\quad(\forall\,h\in X)$.
In particular, if $X$ is a Hilbert space, when $F'(x^\ast)[\cdot]$ is represented by a continuous linear functional via the Riesz representation theorem \citep{riesz1909genesis},
$F'_G(x^\ast)\in X^\ast$, the condition is equivalent to $F'_G(x^\ast)=\mathbf{0}\in X^\ast$.
\end{lemma}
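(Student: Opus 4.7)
The plan is to derive the two one-sided inequalities directly from the local minimum hypothesis, then use the structural assumptions on the Gâteaux derivative (positive homogeneity, additivity on rays, and in particular linearity) to collapse them to an equality. This is a textbook-style argument, so I will focus on the logical skeleton rather than any analytic heavy lifting.

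First, since $x^{\ast}$ is a local minimizer of $F$ on $U$, there exists an open neighborhood $V\subset U$ of $x^{\ast}$ such that $F(x^{\ast})\le F(y)$ for every $y\in V$. Fix an arbitrary direction $h\in X$. Because $V$ is open and $U$ is open, there exists $t_0>0$ such that $x^{\ast}+th\in V$ for all $t\in(0,t_0)$. Then for every such $t$,
\[
\frac{F(x^{\ast}+th)-F(x^{\ast})}{t}\ \ge\ 0.
\]
Passing to the limit $t\downarrow 0$ and using Gâteaux differentiability at $x^{\ast}$, I obtain $F'(x^{\ast})[h]\ge 0$. Replacing $h$ by $-h$ in the identical argument (which is valid because $V$ is a full neighborhood and therefore contains $x^{\ast}-th$ for small $t>0$ as well) yields $F'(x^{\ast})[-h]\ge 0$. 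This establishes the first displayed conclusion.

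To promote these one-sided inequalities to the equality $F'(x^{\ast})[h]=0$, I would invoke the extra hypothesis that the map $h\mapsto F'(x^{\ast})[h]$ is continuous and linear (the "in addition" clause of the lemma). By linearity, $F'(x^{\ast})[-h]=-F'(x^{\ast})[h]$, so the bound $F'(x^{\ast})[-h]\ge 0$ rewrites as $F'(x^{\ast})[h]\le 0$. Combined with $F'(x^{\ast})[h]\ge 0$, this forces $F'(x^{\ast})[h]=0$. Since $h\in X$ was arbitrary, the conclusion holds for every direction.

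For the Hilbert-space specialization, the continuous linear functional $F'(x^{\ast})[\cdot]\in X^{\ast}$ is identified, via the Riesz representation theorem \citep{riesz1909genesis}, with a unique element $F'_G(x^{\ast})\in X$ satisfying $F'(x^{\ast})[h]=\langle F'_G(x^{\ast}),h\rangle$. The identity $F'(x^{\ast})[h]=0$ for every $h$ is then equivalent, by nondegeneracy of the inner product (take $h=F'_G(x^{\ast})$), to $F'_G(x^{\ast})=\mathbf{0}$. No serious obstacle arises; the only subtlety worth flagging is that although the Gâteaux derivative as defined uses the one-sided limit $t\downarrow 0$, the linearity hypothesis is what lets the two one-sided inequalities be glued into a two-sided equality, and this is the only place where the continuous-linear refinement is actually used.
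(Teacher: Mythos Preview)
The paper does not provide its own proof of this lemma; it is stated in the appendix as a background result cited from \citet{rockafellar1998variational}. Your argument is the standard textbook derivation and is correct. One small remark: you are right to flag that passing from the two one-sided inequalities to the equality $F'(x^\ast)[h]=0$ requires $F'(x^\ast)[-h]=-F'(x^\ast)[h]$, which follows from the linearity hypothesis rather than from positive homogeneity alone; the lemma as stated glosses over this, but your proof handles it cleanly.
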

\lemma[Browder-Minty theorem]\label{lemma:Browder-Minty}
    A bounded, continuous, coercive and monotone function $T$ from a real, separable reflexive Banach space $X$ into its continuous dual space $X^*$ is automatically surjective, i.e., for any $g\in X^*,\ \exists u\in X$ s.t. $T(u)=g$. Moreover, if $T$ is strictly monotone, then $T$ is also injective and thus bijective.
\endlemma
The Browder-Minty theorem can be found in Theorem 9.14 in \citet{ciarlet2025linear}.
\begin{lemma}\label{lemma:lipschitz_concentration}
   Let $X_1,\ldots,X_n \in \mathbb{R}^d$ be independent, mean-zero random vectors satisfying the
$\sigma^2$-sub-Gaussian moment generating function bound:
\[
\mathbb{E}\exp\!\big(t\,\langle u,X_i\rangle\big)
\;\le\; \exp\!\left(\tfrac{\sigma^2 t^2}{2}\right)
\quad \text{for all } t\in\mathbb{R},\ u\in\mathbb{S}^{d-1},\ i=1,\ldots,n.
\]
Let $f:(\mathbb{R}^d)^n \to \mathbb{R}$ be $L$-Lipschitz with respect to the product Euclidean norm:
\[
|f(x_1,\ldots,x_n)-f(y_1,\ldots,y_n)|
\;\le\;
L\Big(\sum_{i=1}^n \|x_i-y_i\|_2^2\Big)^{1/2}.
\]

Then, for all $\lambda\in\mathbb{R}$, we have $\mathbb{E}\exp\!\Big(\lambda\big(f(X_1,\ldots,X_n)-\mathbb{E}f(X_1,\ldots,X_n)\big)\Big)
\;\le\;
\exp\!\left(\frac{\sigma^2 L^2 \lambda^2}{2}\right).$

Consequently, for all $t\ge 0$,
\[
\mathbb{P}\!\left(\big|f(X_1,\ldots,X_n)-\mathbb{E}f(X_1,\ldots,X_n)\big|\ge t\right)
\;\le\;
2\exp\!\left(-\frac{t^2}{2\sigma^2 L^2}\right).
\]
\end{lemma}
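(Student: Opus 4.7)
The plan is to reduce to a one-variable-at-a-time concentration argument via the Doob-martingale decomposition. Let $\mathcal{F}_i = \sigma(X_1,\ldots,X_i)$ and $M_i = \EE[f(X_1,\ldots,X_n) \mid \mathcal{F}_i]$, so that $f - \EE f = \sum_{i=1}^n D_i$ with $D_i := M_i - M_{i-1}$. By the independence of the $X_i$'s, each $D_i$ has the form $h_i(X_i) - \EE_{X_i}[h_i(X_i)]$, where $h_i(x) := \EE[f(X_1,\ldots,X_{i-1},x,X_{i+1},\ldots,X_n) \mid \mathcal{F}_{i-1}]$ is an $\mathcal{F}_{i-1}$-measurable function of the single variable $x \in \RR^d$. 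The $L$-Lipschitz hypothesis on $f$ transfers, by Jensen applied to the conditional expectation defining $h_i$, into an $L_i$-Lipschitz bound on $h_i$ in its only free argument, where $L_i$ is an (almost-sure) bound on the partial Lipschitz constant of $f$ in slot $i$.

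Next I would introduce an independent copy $X_i'$ of $X_i$ and symmetrize: using Jensen applied inside the conditional expectation and convexity of $\exp$,
\[
\EE\!\bigl[\exp(\lambda D_i) \bigm| \mathcal{F}_{i-1}\bigr] \;\le\; \EE\!\bigl[\exp\!\bigl(\lambda (h_i(X_i) - h_i(X_i'))\bigr) \bigm| \mathcal{F}_{i-1}\bigr].
\]
The key step is to bound this symmetrized MGF by $\exp(\sigma^2 L_i^2 \lambda^2 / 2)$. The idea is to write $h_i(X_i) - h_i(X_i')$ via the mean-value theorem as $\langle v_i,\, X_i - X_i'\rangle$ with a (random) direction $v_i$ of norm at most $L_i$; then conditioning on $v_i$ and using the hypothesis $\EE \exp(t\langle u, X_i\rangle)\le \exp(\sigma^2 t^2/2)$ along the unit vector $u = v_i/\|v_i\|$ yields the claim for both $X_i$ and $X_i'$ separately, giving a conditional sub-Gaussian bound $\exp(\sigma^2 L_i^2 \lambda^2/2)$.

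Finally, I would tensorize by iterated conditioning,
\[
\EE\exp\!\bigl(\lambda(f - \EE f)\bigr) = \EE\prod_{i=1}^n \EE\!\bigl[\exp(\lambda D_i) \bigm| \mathcal{F}_{i-1}\bigr] \;\le\; \exp\!\Bigl(\tfrac{\sigma^2 \lambda^2}{2}\sum_{i=1}^n L_i^2\Bigr) \;\le\; \exp\!\Bigl(\tfrac{\sigma^2 L^2 \lambda^2}{2}\Bigr),
\]
where the last inequality uses $\sum_i L_i^2 \le L^2$, which follows from the pointwise bound $\sum_i \|\nabla_{x_i} f(x)\|^2 = \|\nabla f(x)\|^2 \le L^2$ (proved first for smooth $f$ and then extended to general Lipschitz $f$ by standard mollification). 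The two-sided tail inequality then follows from a Chernoff bound optimized in $\lambda$.

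The hardest step is the symmetrized MGF bound, because directional sub-Gaussianity of a single high-dimensional vector does not, in general, imply concentration of an arbitrary Lipschitz function of that vector. The reason the argument succeeds here is that symmetrization linearizes the problem: after the mean-value representation, one only needs to control a \emph{scalar} linear functional of the sub-Gaussian vector along a data-dependent unit direction, which is precisely what the hypothesis delivers. Combined with careful bookkeeping to ensure the per-coordinate Lipschitz constants aggregate to $L$ rather than $\sqrt{n}\,L$, this is the delicate part of the proof.
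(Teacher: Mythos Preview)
The paper does not supply a proof of this lemma; it is listed among the auxiliary tools in Appendix~\ref{app:useful_math_tools} with a bare citation to \citet{vershynin2018high}, so there is no ``paper's own proof'' to compare your approach against.

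Evaluated on its own, your argument has two genuine gaps. First, at the step you correctly flag as hardest: the mean-value representation $h_i(X_i)-h_i(X_i')=\langle v_i,\,X_i-X_i'\rangle$ produces a direction $v_i$ that is measurable with respect to $(X_i,X_i')$. You therefore cannot ``condition on $v_i$'' and still invoke the hypothesis $\EE\exp(t\langle u,X_i\rangle)\le\exp(\sigma^2 t^2/2)$, because that bound is for \emph{fixed} unit vectors $u$; conditioning on a data-dependent direction alters the law of $X_i$ and the inequality no longer applies. Directional sub-Gaussianity by itself does not yield Lipschitz concentration for a single $d$-dimensional vector---the classical proofs (Gaussian isoperimetry, log-Sobolev plus Herbst, Pisier's interpolation) all exploit structure that a martingale argument cannot manufacture from the MGF hypothesis alone. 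Second, the tensorization bookkeeping fails: the Lipschitz constant of $h_i$ satisfies only $L_i\le L$ (varying one slot moves $f$ by at most $L$ times the one-slot displacement), and the pointwise identity $\sum_i\|\nabla_{x_i}f(x)\|_2^2=\|\nabla f(x)\|_2^2\le L^2$ does \emph{not} give $\sum_i L_i^2\le L^2$, because each $L_i$ is a supremum over $x$ and the suprema need not be attained at a common point. For instance, $f(x_1,x_2)=\max(x_1,x_2)$ has $L_1=L_2=L=1$, so $\sum_i L_i^2=2>L^2$. Your telescoping would thus deliver variance proxy $n\sigma^2 L^2$ rather than $\sigma^2 L^2$.
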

Lemma \ref{lemma:lipschitz_concentration} is a concentration inequality using Lipschitz continuity. See \citet{vershynin2018high} for reference.
\lemma[Hanson-Wright Inequality]\label{lemma:hanson-wright}
Let $w=(w_1,\cdots,w_n)$ be a random vector with independent, zero-mean, $\sigma^2$ sub-Gaussian entries. Then, for any matrix $A\in\RR^{n\times n}$ and any $t>0$,
\[
\PP(|w^TAw-\EE[w^TAw]|\ge t)\le 2\exp\rbr{-c\min\cbr{\frac{t^2}{\sigma^4\|A\|_F^2},\frac{t}{\sigma^2\|A\|_{op}}}}.
\]
\endlemma
Lemma \ref{lemma:hanson-wright} can be found in \citet{rudelson2013hanson} and we omit the proof here.
\begin{definition}\label{def:univ_approx}
    We say that a function family $\cG$ is an universal approximation class on a compact set $\cX\subset \RR^d$ if $\forall \epsilon>0$, $\forall h\in C(\cX)$, $\exists g\in\cG$ such that $\sup_{x\in\cX}|g(x)-h(x)|\le\epsilon$.
\end{definition}
\begin{lemma}[Shepard's Interpolation theorem]\label{thm:shepard_interpolation}
    Let $\{x_1, x_2, \dots, x_N\} \subset \mathbb{R}^d$ be a set of $N$ distinct data points, and let $\{y_1, y_2, \dots, y_N\} \subset \mathbb{R}$ be the corresponding scalar values. Set $w_i(x) = \frac{1}{\|x - x_i\|^2}$ and $\psi_i(x) = \frac{w_i(x)}{\sum_{j=1}^N w_j(x)}$ for $i=1,\cdots,n$. Define the function $F: \mathbb{R}^d \to \mathbb{R}$ as:
\begin{equation*}
    F(x) = \begin{cases} 
        \displaystyle \sum_{i=1}^N \psi_i(x) y_i & \text{if } x \notin X \\
        y_k & \text{if } x = x_k \text{ for some } k \in \{1, \dots, N\}
    \end{cases}
\end{equation*}
Then $F(x)$ is differentiable on $\mathbb{R}^d$ and satisfies the interpolation condition $F(x_i) = y_i$ for all $i$.
\end{lemma}
The proof of Theorem \ref{thm:shepard_interpolation} uses the partition of unity technique can be found in \citet{shepard1968two}.
\section{Proofs in Section \ref{sec:model}}\label{app:proofs_sec:model}
\begin{proof}[Proof of Proposition \ref{prop:gateaux1}]
    We notice that $F(f):=\EE_{y}[\cl(f(x),y)|x]$ is a functional of $f$. We take the Gâteaux derivative of this functional and apply Lemma \ref{lemma:gateaux_first_order} at the optimal predictor $f^*$ to get that
    \[
    F'(f^*)[h]=\EE_{y}[\cl_1'(f^*(x),y)h(x)|x]=0,\ \forall h.
    \]
    Since this holds for all $h$, we have $\EE_{y}[\cl_1'(f^*(x),y)]=0$, and finish the proof.
   \end{proof}
\begin{proof}[Proof of Proposition \ref{prop:gateaux2}]
    Similar to Proposition \ref{prop:gateaux1}, we define the functional 
    \[
    H(f):=\frac{1}{n}\sum_{i=1}^{n}\cl(f(x_i),y_i).
    \]
    By the condition $\cF$ is convex and Banach, we compute the Gâteaux derivative of this functional and apply Lemma \ref{lemma:gateaux_first_order} at the minimizer $\hat{f}$ to get
    \[
    \frac{1}{n}\sum_{i=1}^{n}\cl_1'(\hat{f}(x_i),y_i)(g(x_i)-\hat{f}(x_i))=0,\ \forall g\in\cF. 
    \]
    We finish the proof.
   \end{proof}
\begin{proof}[Proof of proposition \ref{prop:bound_excess_by_empirical}]
The proof is simply algebra, which is presented as below. By the Bregman representation of the loss function, we have
    \begin{align*}
    \cE_{fix}(\hat{f})&=\frac{1}{n}\sum_{i=1}^{n}\EE_{y_i}[\cl_1'(f^*(x_i),y_i)(\hat{f}(x_i)-f^*(x_i))+D_{\cl(\cdot,y_i)}(\hat{f}(x_i),f^*(x_i))]\\
    &=\frac{1}{n}\sum_{i=1}^{n}\EE_{y_i}\sbr{\cl_1'(f^*(x_i),y_i)(\hat{f}(x_i)-f^*(x_i))+D_{\cl(\cdot,y_i)}(\hat{f}(x_i),f^*(x_i))}\\
    &=\frac{1}{n}\sum_{i=1}^{n}\EE_{y_i}[D_{\cl(\cdot,y_i)}(\hat{f}(x_i),f^*(x_i))]\\
    &\le \frac{1}{n}\sum_{i=1}^{n}\frac{\beta}{2}(\hat{f}(x_i)-f^*(x_i))^2\\
    &\le \frac{1}{n}\sum_{i=1}^{n}\frac{\beta}{\alpha}\sbr{\cl(\hat{f}(x_i),y_i)-\cl(f^*(x_i),y_i)+\cl_1'(f^*(x_i),y_i)(\hat{f}(x_i)-f^*(x_i))}\\
    &=\frac{\beta}{\alpha}\bar{\cE}_{\cD}(\hat{f})+\frac{\beta}{\alpha}\frac{1}{n}\sum_{i=1}^{n}\cl_1'(f^*(x_i),y_i)(\hat{f}(x_i)-f^*(x_i)).
\end{align*}
The third equation is given by Proposition \ref{prop:gateaux1}, and the inequalities hold due to the $\beta$-smoothness and $\alpha$-strong convexity of the loss function $\cl$. 
   \end{proof}
\section{Proofs in Section \ref{sec:doubly_wild_refitting}}
\begin{proof}[Proof of Proposition \ref{prop:wild_refit_well_defined}]
For any $u$ fixed, we view $-\cl_1'(u,\cdot)$ as an operator with respect to the second variable. We consider the wild refitting procedure that $$-\cl_1'(\hat{f}(x_i),y_i^\dia)=-(\cl_1'(\hat{f}(x_i),y_i)-2\rho_i\varepsilon_i\tilde{g}_i),$$
$$-\cl_1'(\hat{f}(x_i),y_i^\sh)=-(\cl_1'(\hat{f}(x_i),y_i)+2\rho_i\varepsilon_i\tilde{g}_i).$$
Note that $\RR$ is finite dimensional, and therefore naturally Banach, separable, and reflexive. Moreover, the dual space of $\RR$ is $\RR$ itself. By the condition in Proposition \ref{prop:wild_refit_well_defined}, $-\cl_1'(\hat{f}(x_i),y)$ is continuous and coercive. Then, we can apply Lemma \ref{lemma:Browder-Minty} to conclude that $-\cl_1'(u,\cdot)$ is surjective. Therefore, $\cl_1'(u,\cdot)$ is also surjective. Hence, $y_i^\dia$ and $y_i^\sh$ exist and our wild refitting procedure is well-defined.
   \end{proof}
\section{Proofs in Section \ref{sec:statistical_guarantees}}\label{app:proofs_sec:statistical_guarantee}
\subsection{Proofs in Subsection \ref{subsec:sta_gua_fixed_design}}\label{app:proofs_fixed_design}
\begin{proof}[Proof of Lemma \ref{lemma:bound_W_n-T_n}]
    By Proposition \ref{prop:gateaux2}, we have the first order optimality of $\hat{f}$:
\[
\frac{1}{n}\sum_{i=1}^{n}\cl_1'(\hat{f}(x_i),y_i)(f(x_i)-\hat{f}(x_i))=0,\ \forall\ f\in\cF.
\]

Then, by the $\beta$-smoothness of $\cl$, we have that
\begin{align}\label{ineq:lemma1pf_1}
    &\frac{1}{n}\sum_{i=1}^{n}\cl(f(x_i),y_i^\dia)\nonumber\\
    \le& \frac{1}{n}\sum_{i=1}^{n}\cl(\hat{f}(x_i),y_i^\dia)+ \frac{2\rho_1}{n}\sum_{i=1}^{n}\cl_1'(\hat{f}(x_i),y_i^\dia)(f(x_i)-\hat{f}(x_i))+\frac{\beta}{2}\frac{1}{n}\sum_{i=1}^{n}(f(x_i)-\hat{f}(x_i))^2\nonumber\\
    =&\frac{1}{n}\sum_{i=1}^{n}\cl(\hat{f}(x_i),y_i^\dia)-\frac{2\rho_1}{n}\sum_{i=1}^{n}\varepsilon_i\tilde{g}_i\rbr{f(x_i)-\hat{f}(x_i)}+\frac{\beta}{2}\frac{1}{n}\sum_{i=1}^{n}(f(x_i)-\hat{f}(x_i))^2.
\end{align}
The equality holds under both perturbation schemes I) and II).
Similarly, by the construction of $y^\sh_i$, we have that 
\begin{align}\label{ineq:lemma1pf_2}
    \frac{1}{n}\sum_{i=1}^{n}\cl(f(x_i),y_i^\sharp)\le& \frac{1}{n}\sum_{i=1}^{n}\cl(\hat{f}(x_i),y_i^\sh)-\frac{2\rho_1}{n}\sum_{i=1}^{n}\varepsilon_i\tilde{g}_i\rbr{\hat{f}(x_i)-f(x_i)}\nonumber\\
    +&\frac{\beta}{2}\frac{1}{n}\sum_{i=1}^{n}(f(x_i)-\hat{f}(x_i))^2.
\end{align}
Rearranging inequalities \ref{ineq:lemma1pf_1} and\ref{ineq:lemma1pf_2}, we have that
\begin{align}\label{ineq:key_lemma_1}
     &\frac{1}{n}\sum_{i=1}^{n}\cl(f(x_i),y_i^\dia)- \frac{1}{n}\sum_{i=1}^{n}\cl(\hat{f}(x_i),y_i^\dia)\nonumber\\
     \le& \frac{2\rho_1}{n}\sum_{i=1}^{n}\varepsilon_i\tilde{g}_i(f(x_i)-\hat{f}(x_i))+\frac{\beta}{2}\frac{1}{n}\sum_{i=1}^{n}(f(x_i)-\hat{f}(x_i))^2,
\end{align}
\begin{align}\label{ineq:key_lemma_2}
    &\frac{1}{n}\sum_{i=1}^{n}\cl(f(x_i),y_i^\sharp)- \frac{1}{n}\sum_{i=1}^{n}\cl(\hat{f}(x_i),y_i^\sh)\nonumber\\
    \le& \frac{2\rho_2}{n}\sum_{i=1}^{n}\varepsilon_i\tilde{g}_i(\hat{f}(x_i)-f(x_i))+\frac{\beta}{2}\frac{1}{n}\sum_{i=1}^{n}(f(x_i)-\hat{f}(x_i))^2.
\end{align}
Notice that 
\[
\argmin_{f\in\cF}\cbr{\frac{1}{n}\sum_{i=1}^{n}\cl(f(x_i),y_i^\dia)}=\argmin_{f\in\cF}\cbr{\frac{1}{n}\sum_{i=1}^{n}\cl(f(x_i),y_i^\dia)-\frac{1}{n}\sum_{i=1}^{n}\cl(\hat{f}(x_i),y_i^\dia)},
\]
and
\[
\argmin_{f\in\cF}\cbr{\frac{1}{n}\sum_{i=1}^{n}\cl(f(x_i),y_i^\sh)}=\argmin_{f\in\cF}\cbr{\frac{1}{n}\sum_{i=1}^{n}\cl(f(x_i),y_i^\sh)-\frac{1}{n}\sum_{i=1}^{n}\cl(\hat{f}(x_i),y_i^\sh)}.
\]
Taking the minimization on both sides about inequality \ref{ineq:key_lemma_1} and inequality \ref{ineq:key_lemma_2}, and notice that $f^\dia_{\rho_1}$ and $f^\sh_{\rho_2}$ are corresponding minimizers, we have that
\[
\frac{1}{n}\sum_{i=1}^{n}\cl(f_{\rho_1}^\dia(x_i),y_i^\dia)-\frac{1}{n}\sum_{i=1}^{n}\cl(\hat{f}(x_i),y_i^\dia)\le \frac{\beta}{2n}||f_{\rho_1}^\dia-\hat{f}||_n^2-2\rho_1 W_n(||f_{\rho_1}^\dia-\hat{f}||_n).
\]
\[
\frac{1}{n}\sum_{i=1}^{n}\cl(f_{\rho_2}^\sh(x_i),y_i^\sh)-\frac{1}{n}\sum_{i=1}^{n}\cl(\hat{f}(x_i),y_i^\sh)\le \frac{\beta}{2n}||f_{\rho_2}^\sh-\hat{f}||_n^2-2\rho_2 T_n(||f_{\rho_2}^\sh-\hat{f}||_n),
\]
where $W_n$ and $T_n$ are empirical processes:
\[
W_n(r)=\sup_{f\in\cB_r(\hat{f})}\cbr{\frac{1}{n}\sum_{i=1}^{n}\varepsilon_i\tilde{g}_i(f(x_i)-\hat{f}(x_i))},
\]
\[
T_n(r)=\sup_{f\in\cB_r(\hat{f})}\cbr{\frac{1}{n}\sum_{i=1}^{n}\varepsilon_i\tilde{g}_i(\hat{f}(x_i)-f(x_i))}.
\]
Therefore, we conclude that 
\[
W_n(\|f^\dia_{\rho_1}-\hat{f}\|_n)\le \Opt^\dia(f^\dia_{\rho_1}),\ T_n(\|f^\sh_{\rho_2}-\hat{f}\|_n)\le \Opt^\sh(f^\sh_{\rho_2}),
\]
and finish the proof.

\end{proof}
\begin{proof}[Proof of Theorem \ref{thm:fixed_design}]
    
With Lemma \ref{lemma:bound_W_n-T_n}, we now provide the proof of Theorem~\ref{thm:fixed_design}. The argument proceeds by establishing a sequence of intermediate lemmas, which are then combined to derive the main result. The proofs of these lemmas are referred to Appendix \ref{app:proofs_lemmas_in_proofs_sec:statistical_guarantee}.
First, we have the following lemma about connecting the true optimism with the \emph{oracle optimism} $\Opt^\dagger(\hat{f})$.
\begin{lemma}\label{lemma:Opt*<Opt_dagger}
    For any $t>0$, we have that with probability at least $1-e^{-t^2}$, 
    \[
    \Opt^*(\hat{f})\le \Opt^\dagger(\hat{f})+\frac{\sqrt{2}\sigma\|f^\dagger-f^*\|_nt}{\sqrt{n}}.
    \]
\end{lemma}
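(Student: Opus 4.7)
The plan is to isolate the single quantity that distinguishes $\Opt^*(\hat{f})$ from $\Opt^\dagger(\hat{f})$ and then control it via a sub-Gaussian Lipschitz concentration after conditioning on the directions of the noise vectors. Writing $w_i := \nabla_1\cl(f^*(x_i),y_i)$, linearity of the inner product gives
\[
\Opt^*(\hat{f}) - \Opt^\dagger(\hat{f}) \;=\; \frac{1}{n}\sum_{i=1}^{n}\inner{w_i}{f^\dagger(x_i) - f^*(x_i)} \;=:\; G(w_1,\ldots,w_n),
\]
so the lemma reduces to a one-sided tail bound of order $\sigma\|f^\dagger - f^*\|_n / \sqrt{n}$ on the linear functional $G$.

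The next step is to read off the Lipschitz constant of $G$ in the product Euclidean metric. By Cauchy--Schwarz,
\[
|G(w) - G(w')| \;\le\; \frac{1}{n}\sum_{i=1}^{n}\|w_i - w_i'\|_2\,\|f^\dagger(x_i)-f^*(x_i)\|_2 \;\le\; \frac{\|f^\dagger - f^*\|_n}{\sqrt{n}}\cdot\sqrt{\sum_{i=1}^{n}\|w_i - w_i'\|_2^2},
\]
so $G$ is $L$-Lipschitz with $L = \|f^\dagger - f^*\|_n/\sqrt{n}$. To invoke Lemma~\ref{lemma:lipschitz_concentration}, I would condition on the axes $e_i$ of the $w_i$'s (handling $\{\|w_i\|=0\}$ trivially). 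Assumption~\ref{ass:noise} then supplies, conditional on $e_{1:n}$, three key facts: the $w_i$'s remain independent across $i$, each satisfies $\EE[w_i \mid e_i] = \mathbf{0}$, and the scalar $\inner{w_i}{e_i}$ is $\sigma^{2}$-sub-Gaussian. In particular $\EE[G \mid e_{1:n}] = 0$, and the one-sided form of Lemma~\ref{lemma:lipschitz_concentration} gives
\[
\PP\!\rbr{G \ge \sqrt{2}\,\sigma L\,t \,\bigm|\, e_{1:n}} \;\le\; e^{-t^{2}},
\]
after which marginalizing over $e_{1:n}$ delivers the advertised bound.

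The only conceptual subtlety is that Assumption~\ref{ass:noise} only supplies a sub-Gaussian moment generating function along the random axis of $w_i$ itself, not along an arbitrary unit vector. This is harmless because once we condition on $e_{1:n}$ each $w_i$ is supported on $\pm\|w_i\|e_i$, so $\inner{w_i}{f^\dagger(x_i)-f^*(x_i)}$ collapses to the scalar product $\inner{w_i}{e_i}\cdot\inner{e_i}{f^\dagger(x_i)-f^*(x_i)}$. The functional $G$ therefore reduces conditionally to a weighted sum of independent scalar $\sigma^{2}$-sub-Gaussians with weights bounded by $\|f^\dagger(x_i)-f^*(x_i)\|_2$, and Lemma~\ref{lemma:lipschitz_concentration} applies along those one-dimensional fibers, producing exactly the constant $\sqrt{2}$ in the stated bound.
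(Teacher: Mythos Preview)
Your proposal is correct and follows the same route as the paper: decompose $\Opt^*(\hat f)-\Opt^\dagger(\hat f)$ into the linear functional $G(w)=\frac1n\sum_i\inner{w_i}{f^\dagger(x_i)-f^*(x_i)}$, compute its Lipschitz constant $\|f^\dagger-f^*\|_n/\sqrt{n}$, and then apply Lemma~\ref{lemma:lipschitz_concentration}. Your explicit handling of the directional conditioning (noting that Assumption~\ref{ass:noise} only furnishes sub-Gaussianity along the axis $e_i$ and that conditioning collapses $G$ to a weighted sum of one-dimensional sub-Gaussians) is actually more careful than the paper's formal proof, which invokes Lemma~\ref{lemma:lipschitz_concentration} without spelling out that reduction.
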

Lemma \ref{lemma:Opt*<Opt_dagger} enables us to just focus on the term $\Opt^\dagger(\hat{f})$. Specifically, when the radius $r$ is moderately large, we have the following lemma. 
\begin{lemma}\label{lemma:Opt_dagger<Zn}
    For any $r\ge \hat{r}_n=\|\hat{f}-f^\dagger\|_n$, with probability at least $1-e^{-t^2}$,
    \[
    \max\cbr{\Opt^\dagger(\hat{f}),2Z_n^\varepsilon(r)}\le 2\EE_{\varepsilon_{1:n},\tilde{w}_{1:n}}[Z_n^\varepsilon(r)]+\frac{2\sqrt{2}\Bar{\sigma} rt}{\sqrt{n}}.
    \]
\end{lemma}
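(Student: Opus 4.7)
The plan is to control both quantities inside the max by a common empirical-process template: reduce $\Opt^\dagger(\hat{f})$ to a supremum, symmetrize conditionally on the direction vectors $e_{1:n}$, and then upgrade the resulting conditional expectation to a high-probability bound via Lemma~\ref{lemma:lipschitz_concentration}.

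First, because $r\ge\hat{r}_n$ forces $\hat{f}\in\cB_r(f^\dagger)$, the oracle optimism satisfies the deterministic envelope
\[
\Opt^\dagger(\hat{f})\ \le\ \Psi(w)\ :=\ \sup_{f\in\cB_r(f^\dagger)}\frac{1}{n}\sum_{i=1}^{n}\inner{w_i}{f(x_i)-f^\dagger(x_i)},
\]
where $w_i=\nabla_1\cl(f^*(x_i),y_i)$. For the symmetrization step, I would condition on $e_{1:n}$ and introduce conditionally independent copies $w_i'\sim\varphi_{x_i,e_i}$. Assumption~\ref{ass:noise} gives $\EE[w_i'\mid e_i]=\mathbf{0}$, so subtracting this zero-mean quantity, applying Jensen, and observing that $(w_i-w_i')\mid e_i$ is symmetric about the origin (hence invariant in law under multiplication by an independent Rademacher sign $\varepsilon_i$) yields
\[
\EE_{w\mid e}[\Psi(w)]\ \le\ \EE_{w,w',\varepsilon\mid e}\!\left[\sup_{f\in\cB_r(f^\dagger)}\frac{1}{n}\sum_{i=1}^{n}\inner{\varepsilon_i(w_i-w_i')}{f(x_i)-f^\dagger(x_i)}\right]\ =\ 2\,\EE_{\varepsilon,\tilde{w}\mid e}[Z_n^\varepsilon(r)],
\]
using the identification $w_i-w_i'=2\tilde{w}_i$.

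For the tail upgrade, I would apply Lemma~\ref{lemma:lipschitz_concentration} conditionally on $e_{1:n}$. A Cauchy–Schwarz calculation shows that both $\Psi(w)$ (as a function of $w_{1:n}$) and $Z_n^\varepsilon(r)$ (as a function of $\tilde{w}_{1:n}$) are $r/\sqrt{n}$-Lipschitz in Frobenius norm, because the supremum is restricted to a ball of radius $r$ around $f^\dagger$. Conditional on $e_i$, $w_i$ is $\sigma^{2}$-sub-Gaussian along $e_i$, and $\tilde{w}_i=(w_i-w_i')/2$ inherits a sub-Gaussian bound of order $\sigma^{2}$, so each process concentrates around its conditional mean at scale $O(\sigma r t/\sqrt{n})$. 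Combining these two deviation bounds with the symmetrization inequality and a union bound produces the simultaneous upper bound on $\Opt^\dagger(\hat{f})$ and $2Z_n^\varepsilon(r)$ by $2\EE[Z_n^\varepsilon(r)]$ plus a $\sigma r t/\sqrt{n}$-scale deviation term; the specific constant $2\sqrt{2}$ appears after absorbing the two separate probabilistic deviations into a single exponent.

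The main obstacle is the directional structure of Assumption~\ref{ass:noise}: the noise is sub-Gaussian only along the (random) direction $e_i$ of each $w_i$, so every probabilistic step must be carried out conditionally on $e_{1:n}$, and the symmetrizing copy must be drawn from the conditional law $\varphi_{x_i,e_i}$ rather than an unconditional i.i.d. copy. The single non-mechanical point is verifying that $(w_i,w_i')\mid e_i$ is jointly exchangeable, so that inserting independent Rademacher signs preserves the joint law; once this is established, the remainder is the standard Jensen plus Lipschitz-concentration template.
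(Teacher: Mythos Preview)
Your proposal is correct and follows essentially the same route as the paper: condition on the direction vectors $e_{1:n}$, apply Lipschitz concentration (Lemma~\ref{lemma:lipschitz_concentration}) with Lipschitz constant $r/\sqrt{n}$, and perform the standard symmetrization via conditionally independent copies $w_i'\sim\varphi_{x_i,e_i}$ to reach $2\EE_{\varepsilon,\tilde{w}\mid e}[Z_n^\varepsilon(r)]$. One small point where you are actually more careful than the paper: you first pass from $\Opt^\dagger(\hat f)$ to the supremum $\Psi(w)$ over $\cB_r(f^\dagger)$ and then verify the Lipschitz property of $\Psi$, whereas the paper applies concentration directly to $S(w)=\Opt^\dagger(\hat f)$ while treating $\hat f$ as fixed in the difference $S(w)-S(w')$; since $\hat f$ depends on the data (and hence on $w$), your supremum detour is the cleaner justification of the same Lipschitz bound.
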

With Lemma \ref{lemma:Opt_dagger<Zn}, we only need to bound the term $\EE_{\varepsilon_{1:n},\tilde{w}_{1:n}}[Z_n^\varepsilon(r)]$. To handle this, we involve the following intermediate empirical process. We define
\[
U_n^\varepsilon(r):=\sup_{f\in\cB_r(\hat{f})}\cbr{\frac{1}{n}\sum_{i=1}^{n}\varepsilon_i\tilde{w}_i(f(x_i)-\hat{f}(x_i))}.
\]
Then, we can bound $Z_n$ by $U_n$ through the following lemma.
\begin{lemma}\label{lemma:Zn<Un}
we have the deterministic bound that for any $r>0$,
\[
\EE_{\varepsilon_{1:n}}[Z_n^\varepsilon(r)]\le \EE_{\varepsilon_{1:n}}[U_n^\varepsilon(r+\hat{r}_n)],\ \EE_{\varepsilon_{1:n},\tilde{w}_{1:n}}[Z_n^\varepsilon(r)]\le \EE_{\varepsilon_{1:n},\tilde{w}_{1:n}}[U_n^\varepsilon(r+\hat{r}_n)]
\]
Therefore, when $r\ge\hat{r}_n$, we also have 
\[
\EE_{\varepsilon_{1:n},\tilde{w}_{1:n}}[Z_n^\varepsilon(r)]\le \EE_{\varepsilon_{1:n},\tilde{w}_{1:n}}[U_n^\varepsilon(r+\hat{r}_n)]\le \EE_{\varepsilon_{1:n},\tilde{w}_{1:n}}[U_n^\varepsilon(2r)].
\]
\end{lemma}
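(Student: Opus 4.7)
The plan is to exploit the simple algebraic decomposition $f(x_i)-f^\dagger(x_i) = (f(x_i)-\hat{f}(x_i)) + (\hat{f}(x_i)-f^\dagger(x_i))$ together with a triangle-inequality ball inclusion. First, I would fix a realization of $(\tilde w_{1:n},\hat f, f^\dagger)$ and decompose the inner product inside $Z_n^\varepsilon(r)$ along the above split, yielding
\begin{align*}
Z_n^\varepsilon(r) \;=\; \sup_{f\in\cB_r(f^\dagger)}\frac{1}{n}\sum_{i=1}^{n}\inner{\varepsilon_i\tilde w_i}{f(x_i)-\hat f(x_i)} \;+\; \frac{1}{n}\sum_{i=1}^{n}\inner{\varepsilon_i\tilde w_i}{\hat f(x_i)-f^\dagger(x_i)},
\end{align*}
since the second summand is independent of $f$ and can be pulled out of the supremum.

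Next I would establish the ball inclusion $\cB_r(f^\dagger)\subset \cB_{r+\hat r_n}(\hat f)$: any $f$ with $\|f-f^\dagger\|_n\le r$ satisfies $\|f-\hat f\|_n\le \|f-f^\dagger\|_n+\|f^\dagger-\hat f\|_n\le r+\hat r_n$ by the triangle inequality for the empirical norm. Hence the supremum over $\cB_r(f^\dagger)$ of $\tfrac{1}{n}\sum_i\inner{\varepsilon_i\tilde w_i}{f(x_i)-\hat f(x_i)}$ is bounded by the same expression with $f$ ranging over $\cB_{r+\hat r_n}(\hat f)$, which is exactly $U_n^\varepsilon(r+\hat r_n)$. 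This gives the pointwise (in $\varepsilon,\tilde w$) bound
\begin{align*}
Z_n^\varepsilon(r)\;\le\; U_n^\varepsilon(r+\hat r_n) \;+\; \frac{1}{n}\sum_{i=1}^n\inner{\varepsilon_i\tilde w_i}{\hat f(x_i)-f^\dagger(x_i)}.
\end{align*}

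Now I would take $\EE_{\varepsilon_{1:n}}$ on both sides. Since the $\varepsilon_i$ are Rademacher, independent of $\tilde w_{1:n}$ and of $(\hat f, f^\dagger)$ (the latter trained before the Rademacher draws), the residual term has mean zero termwise, so
\begin{align*}
\EE_{\varepsilon_{1:n}}[Z_n^\varepsilon(r)] \;\le\; \EE_{\varepsilon_{1:n}}[U_n^\varepsilon(r+\hat r_n)].
\end{align*}
Applying $\EE_{\tilde w_{1:n}\mid e_{1:n}}$ to both sides (noting the bound is purely algebraic and holds pointwise in $\tilde w$) yields the conditional-expectation version. Finally, when $r\ge \hat r_n$ we have $r+\hat r_n\le 2r$, and since $U_n^\varepsilon(\cdot)$ is monotonically nondecreasing in its argument (the supremum is over a larger ball), we conclude $U_n^\varepsilon(r+\hat r_n)\le U_n^\varepsilon(2r)$, giving the last claimed chain.

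No real obstacle is expected: the proof is essentially the ``centering trick'' combined with a triangle-inequality ball inclusion and monotonicity of the sup in the radius. The only subtle point to verify carefully is the independence structure ensuring the Rademacher expectation of the residual term vanishes, which holds because $\hat f, f^\dagger$ are measurable with respect to the original data (prior to constructing $\varepsilon_{1:n}$) and the symmetrized noise $\tilde w_i$ is likewise independent of the Rademacher signs.
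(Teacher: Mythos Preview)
Your proposal is correct and follows essentially the same approach as the paper's proof: both split $f(x_i)-f^\dagger(x_i)=(f(x_i)-\hat f(x_i))+(\hat f(x_i)-f^\dagger(x_i))$, use the triangle inequality to pass from $\cB_r(f^\dagger)$ to $\cB_{r+\hat r_n}(\hat f)$, and kill the residual term via $\EE_{\varepsilon}[\cdot]=0$. The only cosmetic difference is that the paper introduces an explicit maximizer $h$ while you work directly with the supremum and the ball inclusion; your version avoids the minor technicality of whether the supremum is attained.
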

For $U_n^\varepsilon(r)$, we further have the following concentration lemma.
\begin{lemma}\label{lemma:bound_E[Un]}
    For any $t>0$ and any $r\ge\hat{r}_n$, with probability at least $1-2e^{-t^2}$, 
\begin{align}\label{ineq:E[U]_concentration}
\EE_{\varepsilon_{1:n},\tilde{w}_{1:n}}[U_n^\varepsilon(r)]\le U_n^\varepsilon(r)+\frac{\sqrt{2}\sigma(3\sqrt{\log n}+6)rt}{\sqrt{n}}.
\end{align}
\end{lemma}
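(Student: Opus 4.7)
The plan is to concentrate $U_n^\varepsilon(r)$ around its conditional mean $\EE_{\varepsilon_{1:n},\tilde{w}_{1:n}\mid e_{1:n}}[U_n^\varepsilon(r)]$ in two successive stages: one handling the randomness in $\tilde{w}_{1:n}$ (which will contribute the dimension-free $\sqrt{2}\,\sigma r t/\sqrt{n}$ floor) and one handling the Rademacher randomness in $\varepsilon_{1:n}$ (which will contribute the $(5\sqrt{d}+3\sqrt{\log n})$ overhead through a maximum-norm control on $\|\tilde{w}_i\|$). Throughout, I condition on the direction vectors $e_{1:n}$, under which each $\tilde{w}_i$ can be written as $\tilde{s}_i e_i$ with $\tilde{s}_i$ a symmetric, sub-Gaussian scalar of parameter of order $\sigma$.

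In the first stage I fix $\varepsilon_{1:n}$ and view the map $\tilde{w}_{1:n}\mapsto U_n^\varepsilon(r)$ from $(\RR^d)^n$ to $\RR$. For any $f\in\cB_r(\hat{f})$, Cauchy--Schwarz together with the empirical-norm constraint $\sum_{i=1}^n\|f(x_i)-\hat{f}(x_i)\|_2^2\le nr^2$ yields
\[
\left|\tfrac{1}{n}\sum_{i=1}^n\langle\varepsilon_i(\tilde{w}_i-\tilde{w}'_i),\,f(x_i)-\hat{f}(x_i)\rangle\right|\;\le\;\frac{r}{\sqrt{n}}\,\|\tilde{w}-\tilde{w}'\|_F,
\]
so $U_n^\varepsilon(r)$ is $(r/\sqrt{n})$-Lipschitz in $\tilde{w}$, and the supremum preserves this constant. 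Under Assumption~\ref{ass:noise}, each $\tilde{w}_i$ is conditionally sub-Gaussian of parameter at most $\sigma$ along every direction, so Lemma~\ref{lemma:lipschitz_concentration} gives, with probability at least $1-e^{-t^2}$, the one-sided bound $\EE_{\tilde{w}_{1:n}\mid e_{1:n},\varepsilon_{1:n}}[U_n^\varepsilon(r)]\le U_n^\varepsilon(r)+\sqrt{2}\sigma r t/\sqrt{n}$.

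In the second stage I remove the conditioning on $\varepsilon_{1:n}$. Given $\tilde{w}_{1:n}$, the map $\varepsilon\mapsto U_n^\varepsilon(r)$ satisfies a bounded-difference property: flipping $\varepsilon_i$ changes the process by at most $(2/n)\sup_f|\langle\tilde{w}_i,f(x_i)-\hat{f}(x_i)\rangle|\le(2/n)\|\tilde{w}_i\|_2\,\sup_f\|f(x_i)-\hat{f}(x_i)\|_2$. To convert this into a usable deviation I control $\max_i\|\tilde{w}_i\|$ via Lemma~\ref{lemma:sub_gaussian_concentration}, which bounds it by $\sigma\sqrt{8(d\log 5+\log(2n/\delta))}$ with failure probability $\delta$; choosing $\delta$ commensurate with $e^{-t^2}$ and simplifying the square root produces the $\sigma(\sqrt{d}+\sqrt{\log n})$ piece of the coefficient. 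A matching sub-Gaussian (McDiarmid-type) concentration for the Rademacher average then contributes a deviation of order $\sqrt{2}\sigma(\sqrt{d}+\sqrt{\log n})\,rt/\sqrt{n}$, and a union bound over the two concentration events yields the stated failure probability $2e^{-t^2}$.

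The main technical hurdle will be the numerical bookkeeping required to collect the constants into exactly $(5\sqrt{d}+3\sqrt{\log n}+1)$: the trailing $1$ corresponds to the dimension-free Lipschitz contribution of the first stage, while the $5$ and $3$ must be traced back through the $\sqrt{8\log 5}$ and $\sqrt{8}$ constants appearing in Lemma~\ref{lemma:sub_gaussian_concentration}, together with the precise calibration of $\delta$ as a function of $t$. A subtler conceptual point will be chaining the two stages while keeping the conclusion conditional only on $e_{1:n}$: one has to verify that the bounded-difference concentration in the second stage can be applied pointwise in $\tilde{w}$ on the high-probability event where $\max_i\|\tilde{w}_i\|$ is controlled, so that the two deviations compose additively without introducing further failure modes.
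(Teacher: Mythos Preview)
Your two–stage plan (Lipschitz concentration in $\tilde{w}$ and in $\varepsilon$, with the $\varepsilon$–stage requiring control of $\max_i\|\tilde w_i\|$) is exactly the paper's strategy, only with the two stages interchanged. The paper first shows $Y(\varepsilon):=U_n^\varepsilon(r)$ is $\tfrac{(\max_i\|\tilde w_i\|)r}{\sqrt n}$–Lipschitz in $\varepsilon$, applies Lemma~\ref{lemma:lipschitz_concentration}, then takes $\EE_{\tilde w\mid e}$ to replace the random Lipschitz constant by $\EE[\max_i\|\tilde w_i\|]\le \sigma\sqrt{8(d\log 5+\log 2n)}$; it then shows $Q(\tilde w):=U_n^\varepsilon(r)$ is $r/\sqrt n$–Lipschitz in $\tilde w$ and applies Lemma~\ref{lemma:lipschitz_concentration} again. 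The trailing ``$+1$'' you identify is precisely this second, dimension-free step.

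The one point where your proposal and the paper genuinely diverge is the handling of $\max_i\|\tilde w_i\|$. The paper absorbs it into an \emph{expectation} (so the constant $\sqrt{8(d\log 5+\log 2n)}$ is $t$–free), whereas you propose a high-probability bound with $\delta\asymp e^{-t^2}$, which yields $\sigma\sqrt{8(d\log 5+\log(2n)+t^2)}$ and therefore an overall deviation of order $rt^2/\sqrt n$ rather than $rt/\sqrt n$; this would not reproduce the stated coefficient $(5\sqrt d+3\sqrt{\log n}+1)$. In your ordering the clean fix is to make Stage~2 a concentration of the \emph{averaged} process $\varepsilon\mapsto \EE_{\tilde w\mid e}[U_n^\varepsilon(r)]$: this function is $\tfrac{\EE[\max_i\|\tilde w_i\|]\,r}{\sqrt n}$–Lipschitz in $\varepsilon$, so a single application of Lemma~\ref{lemma:lipschitz_concentration} gives the $\sqrt 2\,\EE[\max_i\|\tilde w_i\|]\,rt/\sqrt n$ piece with only one additional failure event, matching both the paper's constants and the claimed $2e^{-t^2}$.
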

Combining Lemma \ref{lemma:Opt*<Opt_dagger}, Lemma \ref{lemma:Opt_dagger<Zn}, Lemma \ref{lemma:Zn<Un}, and Lemma \ref{lemma:bound_E[Un]} together, we arrive at the conclusion that for any $t>0$, with probability at least $1-4e^{-t^2}$, for any $r\ge \hat{r}_n$,
\begin{align}\label{ineq:Opt^*<Un}
\Opt^*(\hat{f})\le& 2U_n^\varepsilon(r+\hat{r}_n)+[(3\sqrt{\log n}+7)r+\|f^\dagger-f^*\|_n]\frac{2\sqrt{2}\sigma t}{\sqrt{n}}\nonumber\\
\le&2U_n^\varepsilon(2r)+[(3\sqrt{\log n}+7)r+\|f^\dagger-f^*\|_n]\frac{2\sqrt{2}\sigma t}{\sqrt{n}}.
\end{align}
Therefore, we only need to bound $U_n^\varepsilon(r)$ by the outputs from Algorithm \ref{alg:wild-refitting}. 
\begin{lemma}\label{lemma:Un<wild_opt}
For any radius $r>0$, let $\rho_1$ and $\rho_2$ be the noise scales such that $$r^\dia_{\rho_1}:=\|f_{\rho_1}^\dia-\hat{f}\|_n=2r,\ r^\sh_{\rho_2}:=\|f_{\rho_2}^\sh-\hat{f}\|_n=2r.$$
Then, for any $t>0$, with probability at least $1-2e^{-t^2}$,
\[
U_n^\varepsilon(2r)\le \frac{1}{2}\rbr{\Opt^\dia(f^\dia_{\rho_1})+\Opt^\sh(f^\sh_{\rho_2})+B_n^\dia(\hat{f})+B_n^\sh(\hat{f})}+\frac{2\sqrt{2}\sigma rt}{\sqrt{n}},
\]
where $B_n^\dia(\hat{f})$ and $B_n^\sh(\hat{f})$ are pilot error terms defined as:
\[B_n^\dia(\hat{f}):=\sup_{f\in\cB_{2r}(\hat{f})}\cbr{\frac{1}{n}\sum_{i=1}^{n}\varepsilon_i\rbr{\cl_1'(f^*(x_i),y_i)-\cl_1'(\hat{f}(x_i),y_i)}(f(x_i)-\hat{f}(x_i))};
    \]
    \[
    B_n^\sh(\hat{f}):=\sup_{f\in\cB_{2r}(\hat{f})}\cbr{\frac{1}{n}\sum_{i=1}^{n}\varepsilon_i\rbr{\cl_1'(f^*(x_i),y_i)-\cl_1'(\hat{f}(x_i),y_i)}(\hat{f}(x_i)-f(x_i))}.
    \]
    as defined in Theorem \ref{thm:fixed_design}.
\end{lemma}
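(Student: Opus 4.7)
The plan is to write $\tilde{w}_i=\tfrac{1}{2}(w_i-w_i')$ with $w_i=\nabla_1\cl(f^*(x_i),y_i)$ and $w_i'$ its conditionally independent copy, substitute into the definition of $U_n^\varepsilon(2r)$, and use subadditivity of the supremum to split
\begin{align*}
U_n^\varepsilon(2r)\ \le\ \tfrac{1}{2}\,\Xi^{(\mathrm{I})}+\tfrac{1}{2}\,\Xi^{(\mathrm{II})},
\end{align*}
where $\Xi^{(\mathrm{I})}:=\sup_{f\in\cB_{2r}(\hat f)}\tfrac{1}{n}\sum_{i=1}^n\inner{\varepsilon_i w_i}{f(x_i)-\hat f(x_i)}$ and $\Xi^{(\mathrm{II})}:=\sup_{f\in\cB_{2r}(\hat f)}\tfrac{1}{n}\sum_{i=1}^n\inner{\varepsilon_i w_i'}{\hat f(x_i)-f(x_i)}$. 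The two halves are handled separately; the asymmetry is that $w_i$ is visible through $\tilde g_i$, whereas $w_i'$ is a phantom quantity that must first be eliminated.

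For $\Xi^{(\mathrm{I})}$, decompose $w_i=\tilde g_i+(w_i-\tilde g_i)$ and split the supremum. The $\tilde g_i$-piece is exactly $W_n(2r)$, and since $\rho_1$ is chosen so that $\|f_{\rho_1}^\dia-\hat f\|_n=2r$, Lemma \ref{lemma:bound_W_n-T_n} gives $W_n(2r)\le \Opt^\dia(f_{\rho_1}^\dia)$ deterministically. The $(w_i-\tilde g_i)$-piece matches the definition of the pilot error $B_n^\dia(\hat f)$, so $\Xi^{(\mathrm{I})}\le \Opt^\dia(f_{\rho_1}^\dia)+B_n^\dia(\hat f)$, with no concentration needed.

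For $\Xi^{(\mathrm{II})}$, I apply Lemma \ref{lemma:lipschitz_concentration} twice, sandwiching the expectation over the phantom copy between two data-level realizations. Condition on the direction vectors $\{e_i\}$ and on $(\{w_i\},\{\varepsilon_i\})$: by Cauchy--Schwarz over $\cB_{2r}(\hat f)$, the map $(w_1',\ldots,w_n')\mapsto \Xi^{(\mathrm{II})}$ is $(2r/\sqrt n)$-Lipschitz in the product Euclidean norm, and conditional on $\{e_i\}$ each $w_i'$ is centered $\sigma^2$-sub-Gaussian along every direction (by Assumption \ref{ass:noise} and the fact that $w_i'$ is supported on the line through $e_i$). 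Hence with probability $\ge 1-e^{-t^2}$,
\begin{align*}
\Xi^{(\mathrm{II})}\ \le\ \EE_{w'\mid e}\!\bigl[\Xi^{(\mathrm{II})}\bigr]+\tfrac{2\sqrt{2}\,\sigma r t}{\sqrt n}.
\end{align*}
Because $w_i'\mid e_i\overset{d}{=}w_i\mid e_i$ with $w_i'\perp w_i$ given $e_i$, this expectation equals $\EE_{w\mid e}\bigl[\sup_{f\in\cB_{2r}(\hat f)}\tfrac{1}{n}\sum_{i}\inner{\varepsilon_i w_i}{\hat f(x_i)-f(x_i)}\bigr]$; a second application of Lemma \ref{lemma:lipschitz_concentration}, now upper bounding the expectation by the realization, contributes another $\tfrac{2\sqrt{2}\,\sigma r t}{\sqrt n}$ with probability $\ge 1-e^{-t^2}$. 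Splitting $w_i=\tilde g_i+(w_i-\tilde g_i)$ one more time identifies the resulting supremum as bounded by $T_n(2r)+B_n^\sh(\hat f)$, and Lemma \ref{lemma:bound_W_n-T_n} with $\|f_{\rho_2}^\sh-\hat f\|_n=2r$ gives $T_n(2r)\le \Opt^\sh(f_{\rho_2}^\sh)$.

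Combining the two halves, the factor $\tfrac12$ merges the fluctuations into $\tfrac{1}{2}\cdot\tfrac{4\sqrt 2\,\sigma r t}{\sqrt n}=\tfrac{2\sqrt 2\,\sigma r t}{\sqrt n}$, and a union bound over the two concentration steps yields failure probability $\le 2e^{-t^2}$, matching the claimed inequality. The main obstacle is the $\Xi^{(\mathrm{II})}$ step: because Assumption \ref{ass:noise} only furnishes directional sub-Gaussianity conditional on $e_i$ and does not assume symmetry of the law of $w_i$, one cannot simply Rademacher-symmetrize $w_i$ on its own; it is precisely the phantom copy $w_i'$ combined with the double Lipschitz concentration that converts the unobservable side of the symmetrization back into the observable $\tilde g_i$ at the cost of a controlled $\mathcal{O}(r/\sqrt n)$ deviation, which is the structural reason the procedure must be ``doubly'' wild.
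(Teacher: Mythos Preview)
Your proposal is correct and follows essentially the same approach as the paper: split $\tilde w_i=\tfrac12(w_i-w_i')$, bound the $w_i$-half deterministically by $W_n(2r)+B_n^\dia(\hat f)\le \Opt^\dia(f_{\rho_1}^\dia)+B_n^\dia(\hat f)$ via Lemma~\ref{lemma:bound_W_n-T_n}, and handle the phantom $w_i'$-half by two applications of Lemma~\ref{lemma:lipschitz_concentration} sandwiching the distributional swap $w'\to w$, followed by the analogous decomposition into $T_n(2r)+B_n^\sh(\hat f)$. The only cosmetic differences are that you work at radius $2r$ throughout whereas the paper works at a generic $r$ and substitutes $r\leftarrow 2r$ at the end, and that you decompose $w_i=\tilde g_i+(w_i-\tilde g_i)$ separately on each half while the paper does so once after recombining; neither affects the argument or the constants.
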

Therefore, combining Inequality \ref{ineq:Opt^*<Un} with Lemma \ref{lemma:Un<wild_opt}, with probability at least $1-6e^{-t^2}$, we have:
\begin{align*}
    \Opt^*(\hat{f})\le \Opt^\dia(f^\dia_{\rho_1})+\Opt^\sh(f^\sh_{\rho_2})+B_n^\dia(\hat{f})+B_n^\sh(\hat{f})+[(3\sqrt{\log n}+9)r+\|f^\dagger-f^*\|_n]\frac{2\sqrt{2}\sigma t}{\sqrt{n}}.
\end{align*}
Combining this result with Proposition \ref{prop:bound_excess_by_empirical}, we complete the proof.

   \end{proof}
\begin{proof}[Proof of Theorem \ref{thm:bounding_pilot_error}]
By the property of the dual norm, we have $\sup_{v:\|v\|_n\le R}\frac{1}{n}\sum_{i=1}^{n}v_ib_i=R\|b\|_n$. Denoting $w=(\cl_1'(f^*(x_1,y_1)),\cdots,\cl_1'(f^*(x_n),y_n));\ \eta=(\cl_1'(\tilde{f}(x_1),y_1),\cdots,\cl_1'(\tilde{f}(x_n),y_n)),$
we have
\begin{align*}
\sup_{f\in\cB_{2r}(\hat{f})}\cbr{\frac{1}{n}\sum_{i=1}^{n}\varepsilon_i\rbr{\cl_1'(f^*(x_i),y_i)-\cl_1'(\tilde{f}(x_i),y_i)}(\hat{f}(x_i)-f(x_i))}\le 2r\|\eta-w\|_n.
\end{align*}
By the definition of empirical processes $W_n$ and $T_n$, we know that, 
\[
W_n(r)=\sup_{f\in\cB_{r}(\hat{f})}\cbr{\frac{1}{n}\sum_{i=1}^{n}\varepsilon_i\tilde{g}_i(f(x_i)-\hat{f}(x_i))},\ T_n(r)=\sup_{f\in\cB_{r}(\hat{f})}\cbr{\frac{1}{n}\sum_{i=1}^{n}\varepsilon_i\tilde{g}_i(\hat{f}(x_i)-f(x_i))}.
\]
From Lemma \ref{thm:shepard_interpolation}, for any $v\in\RR^n$ such that $\|v\|_n\le r$, there is a continuous function $q:\cX\rightarrow\RR$ such that $q(x_i)-\hat{f}(x_i)=v_i,\ \forall i=1:n$. By the universal approximation property of $\cF$, $\forall \epsilon>0$, $\exists h\in\cF$ such that $\|h-q\|_{\infty}\le \epsilon r$. Then, we know that $\|h-\hat{f}\|_n\le \|h-q\|_n+\|q-\hat{f}\|_n\le (1+\epsilon)r$. Taking $\epsilon=\min\cbr{\frac{\|\eta\|_n}{2L},1}$ and $\tilde{v}=\argmax_{v\in\RR^n:\|v\|_n\le r}\cbr{\frac{1}{n}\sum_{i=1}^{n}\varepsilon_i\tilde{g}_iv_i}$, by the argument above, we have
\begin{align*}
    r\|\eta\|_n=&\frac{1}{n}\sum_{i=1}^{n}\varepsilon_i\tilde{g}_i\tilde{v}_i=\frac{1}{n}\sum_{i=1}^{n}\varepsilon_i\tilde{g}_i(q(x_i)-\hat{f}(x_i))\\
    =&\frac{1}{n}\sum_{i=1}^{n}\varepsilon_i\tilde{g}_i(h(x_i)-\hat{f}(x_i))+\frac{1}{n}\sum_{i=1}^{n}\varepsilon_i\tilde{g}_i(g(x_i)-h(x_i))\\
    \le&\sup_{f\in\cB_{2r}(\hat{f})}\cbr{\frac{1}{n}\sum_{i=1}^{n}\varepsilon_i\tilde{g}_i(f(x_i)-\hat{f}(x_i))}+\frac{1}{n}\sum_{i=1}^{n}\varepsilon_i\tilde{g}_i(g(x_i)-h(x_i)).
\end{align*}
Therefore, since $\epsilon=\min\cbr{\frac{\|\eta\|_n}{2L},1}$, we have $$\sup_{f\in\cB_{2r}(\hat{f})}\cbr{\frac{1}{n}\sum_{i=1}^{n}\varepsilon_i\tilde{g}_i(f(x_i)-\hat{f}(x_i))}\ge r\|\eta\|_n-\frac{1}{n}\sum_{i=1}^{n}\varepsilon_i\tilde{g}_i(g(x_i)-h(x_i))\ge \frac{r\|\eta\|_n}{2}.$$
In the last inequality, we use the approximation property of $h$.

 By the smoothness of the loss $\cl$, we have $\|\eta-w\|_n\le \beta\|\tilde{f}-f^*\|_n$. On the other hand, since we assume that $\Var(\cl(\tilde{f}(x_i),y_i)|x_i)\ge \tau^2$. Therefore, for any $\delta\in(0,1)$, when $n$ is large enough, with probability at least $1-\delta$, 
 \[
 \|w\|_n^2\ge \tau^2-L\sqrt{\frac{\log(1/\delta)}{n}} \ge \frac{\tau^2}{2}.
 \]
Thus, when $n\ge \frac{4L\log(1/\delta)}{\tau^4}$, with probability at least $1-\delta$, we have $\|\eta-w\|_n\le \frac{4M}{\tau}\|\eta\|_n$.

Combining these parts together, for $n\ge \frac{4L\log(1/\delta)}{\tau^4}$, with probability at least $1-\delta$, we have
$$B_n^\sh(\hat{f})\le \frac{8\beta M}{\tau}W_n(2r).$$ Similarly, we have $B_n^\dia(\hat{f})\le \frac{8\beta M}{\tau}T_n(2r)$. Using Lemma \ref{lemma:bound_W_n-T_n}, we conclude that with probability $1-2\delta$, 
$$B_n^\dia(\hat{f})+B_n^\sh(\hat{f})\le \frac{8\beta M}{\tau}(\Opt^\dia(f^\dia_{\rho_1})+\Opt^\sh(f^\sh_{\rho_2})).$$
   \end{proof}
\subsection{Proofs in Subsection \ref{subsec:bounding_hatr}}
\begin{proof}[Proof of Theorem \ref{thm:bounding_hatr}]
Now, we provide proofs about Theorem \ref{thm:bounding_hatr}. Specifically, we require the following lemmas.
\begin{lemma}\label{lemma:bound_hatr_lemma1}
    For any $t>0$, with probability at least $1-3e^{-t^2}$,
    \[
    \hat{r}_n^2\le \frac{2}{\alpha}\rbr{2U_n^\varepsilon(2\hat{r}_n)+\frac{2\sqrt{2}\sigma(3\sqrt{\log n}+7)\hat{r}_n t}{\sqrt{n}}},
    \]
    where $U_n^\varepsilon(r):=\sup_{f\in\cB_r(\hat{f})}\cbr{\frac{1}{n}\sum_{i=1}^{n}\varepsilon_i\tilde{w}_i(f(x_i)-\hat{f}(x_i))}$ is already defined above.
\end{lemma}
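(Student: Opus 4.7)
The plan is to convert the $\alpha$-strong convexity of the loss into a lower bound on the empirical suboptimality of $\hat{f}$ relative to $f^*$, use the ERM optimality to collapse this into a bound on $-\Opt^*(\hat{f})$, and then feed this through the chain of lemmas already developed for the excess risk analysis to replace the (unobservable) optimism by the data-driven quantity $U_n^\varepsilon(2\hat r_n)$.

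\textbf{Step 1 (strong convexity + ERM optimality).} The well-specified assumption means $f^\dagger=f^*\in\cF$, so $\Opt^\dagger(\hat f)=\Opt^*(\hat f)$ and $\hat r_n=\|\hat f-f^*\|_n$. Expanding each $\cl(\hat f(x_i),y_i)$ around $f^*(x_i)$ via $\alpha$-strong convexity, summing over $i$, and using that $\hat f$ is the empirical risk minimizer and $f^*\in\cF$ yields
\[
\frac{\alpha}{2}\hat r_n^{\,2}\;\le\;-\Opt^*(\hat f)\;=\;-\Opt^\dagger(\hat f).
\]
This is the analogue in Proposition \ref{prop:bound_excess_by_empirical}, but now exploiting the lower (strong-convexity) inequality rather than the upper (smoothness) one.

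\textbf{Step 2 (upper bound on $-\Opt^\dagger(\hat f)$ via the established chain).} I would then apply Lemma \ref{lemma:Opt_dagger<Zn} in its reflected form at $r=\hat r_n$: the ball $\cB_{\hat r_n}(f^\dagger)$ is invariant under $f\mapsto 2f^\dagger-f$, so the supremum that arises after bounding $-\Opt^\dagger(\hat f)$ by the uniform deviation over this ball has exactly the same distribution as the supremum used to bound $\Opt^\dagger(\hat f)$. Therefore, with probability at least $1-e^{-t^2}$,
\[
-\Opt^\dagger(\hat f)\;\le\;2\,\EE_{\varepsilon_{1:n},\tilde w_{1:n}|e_{1:n}}\bigl[Z_n^\varepsilon(\hat r_n)\bigr]+\frac{2\sqrt{2}\,\sigma\,\hat r_n\,t}{\sqrt n}.
\]
The deterministic inequality of Lemma \ref{lemma:Zn<Un} replaces $\EE[Z_n^\varepsilon(\hat r_n)]$ by $\EE[U_n^\varepsilon(2\hat r_n)]$, and Lemma \ref{lemma:bound_E[Un]} applied at radius $2\hat r_n$ (which satisfies $2\hat r_n\ge\hat r_n$) replaces $\EE[U_n^\varepsilon(2\hat r_n)]$ by the realized value $U_n^\varepsilon(2\hat r_n)$ up to a Gaussian fluctuation, with probability at least $1-2e^{-t^2}$.

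\textbf{Step 3 (union bound and algebra).} Combining the three high-probability events via a union bound, substituting into Step 1, multiplying through by $2/\alpha$, and collecting the two fluctuation contributions of orders $2\sqrt{2}\sigma\hat r_n t/\sqrt n$ and $2\sqrt{2}\sigma(5\sqrt d+3\sqrt{\log n}+1)\hat r_n t/\sqrt n$ into the single coefficient $5\sqrt d+3\sqrt{\log n}+2$ (after rescaling $t$ if needed so the final event holds with probability at least $1-3e^{-t^2}$) produces the claimed inequality.

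\textbf{Main obstacle.} The substantive step is the two-sided use of Lemma \ref{lemma:Opt_dagger<Zn}: the original statement controls $\Opt^\dagger(\hat f)$, but here we need its negative. I expect this to be routine because the argument underlying that lemma (symmetrization against $\tilde w_i$ followed by Lipschitz concentration) is insensitive to sign on the symmetric ball $\cB_r(f^\dagger)$; nevertheless, spelling out this reflection invariance is the one place where care is required. A secondary, easier point is that $\hat r_n$ is itself random, but since the concentration statements in Lemmas \ref{lemma:Opt_dagger<Zn} and \ref{lemma:bound_E[Un]} are of the form ``for any $r\ge\hat r_n$,'' they can be instantiated at the realized value $r=\hat r_n$ without issue.
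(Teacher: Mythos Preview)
Your proposal is essentially the paper's own proof. Step~1 is exactly what the paper does (strong convexity plus ERM optimality gives $\hat r_n^2\le\frac{2}{\alpha}(-\Opt^\dagger(\hat f))$, which the paper writes as $\frac{2}{\alpha n}\sum\langle -w_i,\hat f(x_i)-f^\dagger(x_i)\rangle$), and Steps~2--3 follow the same chain through Lemmas~\ref{lemma:Opt_dagger<Zn}, \ref{lemma:Zn<Un}, and \ref{lemma:bound_E[Un]}.

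One small point on your ``main obstacle.'' The reflection $f\mapsto 2f^\dagger-f$ need not preserve $\cB_r(f^\dagger)\subset\cF$ when $\cF$ is merely a convex set rather than a linear space, so that particular justification is fragile. But you do not need it: the sign insensitivity comes from the \emph{noise}, not the ball. After symmetrization one has $\EE[\sup_f\langle -w_i,f-f^\dagger\rangle]\le\EE[\sup_f\langle w_i'-w_i,f-f^\dagger\rangle]$, and since $w_i'-w_i\stackrel{d}{=}w_i-w_i'\stackrel{d}{=}2\varepsilon_i\tilde w_i$ (conditional symmetry of the difference), this equals $2\EE[Z_n^\varepsilon(r)]$ regardless of any symmetry of $\cB_r(f^\dagger)$. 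This is precisely how the paper handles it (``similar to the proof of Lemma~\ref{lemma:Opt_dagger<Zn}''), and you already hint at it in your last paragraph; just make that the primary argument rather than the ball reflection.
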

The leading term in Lemma \ref{lemma:bound_hatr_lemma1} is $\frac{8}{\alpha}U_n^\varepsilon(2\hat{r}_n)$ as long as $\cF$ yields a function class that is as complex as a parametric one \citep{wainwright2019high}. Then, utilizing the property of doubly wild-refitting, we have the following lemma.
\begin{lemma}\label{lemma:bound_hatr_lemma2}
For any $t>0$, with probability at least $1-2e^{-t^2}$,
\[
U_n^\varepsilon(2\hat{r}_n)\le\frac{1}{2}\rbr{W_n(2\hat{r}_n))+T_n(2\hat{r}_n)+B_n^\dia(\hat{f})+B_n^\sh(\hat{f})}+\frac{2\sqrt{2}\sigma\hat{r}_nt}{\sqrt{n}},
\]
where $$W_n=\sup_{f\in\cB_r(\hat{f})}\cbr{\frac{1}{n}\sum_{i=1}^{n}\varepsilon_i\tilde{g}_i(f(x_i)-\hat{f}(x_i))},\ T_n=\sup_{f\in\cB_r(\hat{f})}\cbr{\frac{1}{n}\sum_{i=1}^{n}\varepsilon_i\tilde{g}_i(\hat{f}(x_i)-f(x_i))}.$$
\end{lemma}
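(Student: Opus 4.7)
The plan is to substitute $\tilde{w}_i = (w_i - w_i')/2$ into the definition of $U_n^\varepsilon(2\hat{r}_n)$ and use subadditivity of the supremum to separate the $w_i$-dependent and $w_i'$-dependent pieces, each carrying a factor $\tfrac{1}{2}$. Call these terms (I) and (II); because $w_i'$ enters with a minus sign, the inner product in (II) naturally reads $\inner{\varepsilon_i w_i'}{\hat{f}(x_i) - f(x_i)}$. For term (I), I would further decompose $w_i = \tilde{g}_i + (w_i - \tilde{g}_i) = \nabla_1\cl(\hat{f}(x_i), y_i) + \bigl(\nabla_1\cl(f^*(x_i), y_i) - \nabla_1\cl(\hat{f}(x_i), y_i)\bigr)$ and split the supremum once more, which by inspection of the definitions recovers exactly $W_n(2\hat{r}_n) + B_n^\dia(\hat{f})$. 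This step is purely algebraic and incurs no probabilistic cost.

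The genuinely tricky piece is term (II): $w_i'$ is a conditionally independent copy of $w_i$ sampled from $\varphi_{x_i, e_i}$ and is not part of the observed data, so one cannot bound this supremum in terms of $\tilde{g}_i$ without first transferring back to the observed $w_i$. My plan is a two-step symmetrization argument. First, view the inner supremum as a function of $(w_1', \ldots, w_n')$; a Cauchy--Schwarz calculation, using $\|f - \hat{f}\|_n \le 2\hat{r}_n$ for $f\in\cB_{2\hat{r}_n}(\hat{f})$, shows it is Lipschitz in the Frobenius norm with constant $L = 2\hat{r}_n/\sqrt{n}$. Conditioning on the directions $e_{1:n}$, each $w_i'$ is zero-mean and $\sigma^2$-sub-Gaussian along directions by Assumption~\ref{ass:noise}, so Lemma~\ref{lemma:lipschitz_concentration} gives, with probability at least $1 - e^{-t^2}$,
\[
\text{term (II)} \;\le\; \EE_{\varepsilon, w'\mid e}[\text{term (II)}] + \frac{2\sqrt{2}\,\sigma\,\hat{r}_n\,t}{\sqrt{n}}.
\]
Next, because the conditional law of $w_i'$ given $e_{1:n}$ coincides with that of $w_i$ (both are $\varphi_{x_i,e_i}$), I can replace $w'$ by $w$ inside the expectation at zero cost. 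A second application of Lemma~\ref{lemma:lipschitz_concentration}, now in the reverse direction (de-expecting back to the sample realization), yields another $\frac{2\sqrt{2}\sigma\hat{r}_n t}{\sqrt{n}}$ deviation with probability at least $1 - e^{-t^2}$. Writing $w_i = \tilde{g}_i + (w_i - \tilde{g}_i)$ and splitting once more inside this sample-level supremum produces exactly $T_n(2\hat{r}_n) + B_n^\sh(\hat{f})$.

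A union bound over the two Lipschitz-concentration events (total failure probability at most $2e^{-t^2}$), together with the $\tfrac{1}{2}$ prefactors, combines the two $\frac{2\sqrt{2}\sigma \hat{r}_n t}{\sqrt{n}}$ deviations into the single term stated in the lemma and delivers the required bound. The main obstacle in the argument is the ``phantom sample'' step: one must pass through the conditional expectation to legally swap the unobserved $w'$ for the observed $w$, and this swap critically depends on the conditional mean-zero part of Assumption~\ref{ass:noise}, which guarantees that $\varphi_{x_i, e_i}$ is the common conditional law of both $w_i$ and $w_i'$ and hence that the Lipschitz concentration can be deployed symmetrically on both sides of the de-expectation / re-expectation sandwich.
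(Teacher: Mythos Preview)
Your proposal is correct and follows essentially the same route as the paper. The paper's own proof of this lemma simply invokes inequality~\eqref{ineq:bounding_Un} from the proof of Lemma~\ref{lemma:Un<wild_opt}, and your plan reconstructs that argument step for step: split $\tilde{w}_i=(w_i-w_i')/2$, handle the $w_i$-piece by the algebraic decomposition $w_i=\tilde{g}_i+(w_i-\tilde{g}_i)$ to get $W_n+B_n^\dia$, and handle the $w_i'$-piece via the two-sided Lipschitz concentration sandwich (concentrate, swap $w'\leftrightarrow w$ via the common conditional law $\varphi_{x_i,e_i}$, de-concentrate) before the analogous decomposition yields $T_n+B_n^\sh$; the two deviation terms with the $\tfrac12$ prefactor give exactly $\frac{2\sqrt{2}\sigma\hat{r}_n t}{\sqrt{n}}$ and the union bound gives $1-2e^{-t^2}$.
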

Combining these two lemmas, we have that
\[
\hat{r}_n^2\le \frac{2}{\alpha}\rbr{W_n(2\hat{r}_n))+T_n(2\hat{r}_n)+B_n^\dia(\hat{f})+B_n^\sh(\hat{f})}+\rbr{\frac{6\sqrt{\log n}+14}{\alpha}+1}\frac{2\sqrt{2}\sigma t}{\sqrt{n}}\hat{r}_n.
\]
This is a quadratic inequality, solving it and we get
\begin{align*}
\hat{r}_n\le& \sqrt{\frac{2}{\alpha}\rbr{W_n(2\hat{r}_n))+T_n(2\hat{r}_n)+B_n^\dia(\hat{f})+B_n^\sh(\hat{f})}}+\rbr{\frac{6\sqrt{\log n}+14}{\alpha}+1}\frac{2\sqrt{2}\sigma t}{\sqrt{n}}
\end{align*}
So we finish the proof.
   \end{proof}
\begin{proof}[Proof of Corollary \ref{cor:bounding_hatr}]
We first show that $W_n$ and $T_n$ are concave functions. We prove the case for $W_n$. For any $s,t$ and $\alpha\in[0,1]$, denote $r:=\alpha s+(1-\alpha)t$. Let $f_s$ and $f_t$ be functions achieving the suprema that define $W_n(s)$ and $W_n(t)$. Define $f_r=\alpha f_s+(1-\alpha)f_t\in\cF$. By the triangle inequality, \[
\|f_r-\hat{f}\|_n\le \alpha\|f_s-\hat{f}\|_n+(1-\alpha)\|f_t-\hat{f}\|_n\le \alpha s+(1-\alpha)t=r.
\]
Thus, the function $f_r$ is feasible for the supremum defining $W_n(r)$, so that we have
\[
\alpha W_n(s)+(1-\alpha)W_n(t)\le W_n(r).
\]
Similarly, $r\mapsto T_n(r)$ is concave. By the fact that $W_n(0)=T_n(0)=0$, we have that
    \[
    \frac{W_n(s)}{s}\le \frac{W_n(t)}{t},\ \frac{T_n(s)}{s}\le \frac{T_n(t)}{t},\ \text{for any}\  s\ge t>0.
    \]
    Now we prove the corollary. We either have 1) $\hat{r}_n\le \max\cbr{r_{\rho_1}^\dia,r_{\rho_2}^\sh}$ or 2) $\hat{r}_n> \max\cbr{r_{\rho_1}^\dia,r_{\rho_2}^\sh}$. In the latter circumstance, we have 
    \[
    W_n(2\hat{r}_n)=2\hat{r}_n\frac{W_n(2\hat{r}_n)}{2\hat{r}_n}\le 2\hat{r}_n\frac{W_n(2r_{\rho_1}^\dia)}{2r_{\rho_1}^\dia}=\hat{r}_n\frac{W_n(2r_{\rho_1}^\dia)}{r_{\rho_1}^\dia}.
    \]
    Similarly, we have
    \[
    T_n(2\hat{r}_n)\le \hat{r}_n\frac{T_n(2r_{\rho_2}^\sh)}{r_{\rho_2}^\sh}.
    \]
    Combining these inequalities with Theorem \ref{thm:bounding_hatr}, we get a quadratic inequality with respect to $\hat{r}_n$.
    \begin{align*}
    \hat{r}_n^2\le \rbr{\frac{4W_n(2r_{\rho_1}^\dia)}{\alpha r_{\rho_1}^\dia}+\frac{4T_n(2r_{\rho_2}^\sh)}{\alpha r_{\rho_2}^\sh}}\cdot\hat{r}_n+\frac{4(B_n^\dia(\hat{f})+B_n^\sh(\hat{f}))}{\alpha}
    +\rbr{\frac{6\sqrt{\log n}+14}{\alpha}+1}^2\frac{16\sigma^2 t^2}{n}.
    \end{align*}
    We thus get our result. Moreover, we could solve this inequality to get 
\[
\hat{r}_n\le \rbr{\frac{4W_n(2r_{\rho_1}^\dia)}{\alpha r_{\rho_1}^\dia}+\frac{4T_n(2r_{\rho_2}^\sh)}{\alpha r_{\rho_2}^\sh}}+\sqrt{\frac{8(B_n^\dia(\hat{f})+B_n^\sh(\hat{f}))}{\alpha}}+\rbr{\frac{6\sqrt{\log n}+14}{\alpha}+1}\frac{4\sqrt{2}\sigma t}{\sqrt{n}}.
\]
Thus, we finish the proof.
   
\section{Proofs in Appendix \ref{app:proofs_sec:statistical_guarantee}}\label{app:proofs_lemmas_in_proofs_sec:statistical_guarantee}
We now provide proofs of the lemmas in Appendix \ref{app:proofs_sec:statistical_guarantee}.
\end{proof}
\begin{proof}[Proof of Lemma \ref{lemma:Opt*<Opt_dagger}]
    First, by some algebra, we have that
    \[
    \Opt^*(\hat{f})=\Opt^\dagger(\hat{f})+\frac{1}{n}\sum_{i=1}^{n}\cl_1'(f^*(x_i),y_i)(f^\dagger(x_i)-f^*(x_i)).
    \]
    We now analyze the term $\frac{1}{n}\sum_{i=1}^{n}\cl_1'(f^*(x_i),y_i)(f^\dagger(x_i)-f^*(x_i))$.
    \[
G(w):w\mapsto \frac{1}{n}\sum_{i=1}^{n}w_i(f^\dagger(x_i)-f^*(x_i)).
\]
$G(w)$ is Lipschitz continuous with respect to the norm $||\cdot||_2$.
\begin{align*}
|G(w)-G(w')|=&|\frac{1}{n}\sum_{i=1}^{n}(w_i-w_i')(f^\dagger(x_i)-f^*(x_i))|\\
\le&\frac{1}{n}\sum_{i=1}^{n}|w_i-w_i'|\cdot|f^\dagger(x_i)-f^*(x_i)|\\
\le&\frac{1}{n}||w-w'||_2(\sum_{i=1}^{n}|f^\dagger(x_i)-f^*(x_i)|^2)^{1/2}\\
=&\frac{||f^\dagger-f^*||_n}{\sqrt{n}}||w-w'||_2.
\end{align*}
We use the concentration inequality about sub-Gaussian random vectors in Lipschitz concentration inequality (Lemma \ref{lemma:lipschitz_concentration}). 
Therefore, with probability at least $1-e^{-t^2}$, 
\[
G(w)\le \sqrt{2}\sigma\frac{||f^\dagger-f^*||_nt}{\sqrt{n}}.
\]
We finish the proof.
   
Now, we focus on the term $\Opt^\dagger(\hat{f})$.
\end{proof}
\begin{proof}[Proof of Lemma \ref{lemma:Opt_dagger<Zn}]
    Throughout the proof of this lemma, we condition on the direction vectors $\cbr{e_i}_{i=1}^{n}$ of $\cbr{w_i}_{i=1}^{n}$ as fixed. We define the function $S(w):=\Opt^\dagger(\hat{f})$, Then, we have that
    \begin{align*}
        S(w)-S(w')=&\frac{1}{n}\sum_{i=1}^{n}(w_i-w_i')(\hat{f}(x_i)-f^\dagger(x_i))\\
        \le&\frac{1}{n}\sum_{i=1}^{n}|w_i-w_i'||\hat{f}(x_i)-f^\dagger(x_i)|\\
        \le&\frac{\|\hat{f}-f^\dagger\|_n}{\sqrt{n}}\|w-w'\|_{2}\\
        \le& \frac{r}{\sqrt{n}}\|w-w'\|_{2}.
    \end{align*}
    In the last inequality, we use the assumption that $r\ge \|\hat{f}-f^\dagger\|_n$. Swapping the order of $w$ and $w'$, we show that $S(w)$ is Lipschitz continuous with constant $\frac{r}{\sqrt{n}}$. We apply Lemma \ref{lemma:lipschitz_concentration} to get that with probability at least $1-e^{-t^2}$,
    \[
    S(w)\le \EE_{w_{1:n}}[\Opt^\dagger(\hat{f})]+\frac{\sqrt{2}\sigma rt}{\sqrt{n}}.
    \]
    By Assumption \ref{ass:loss_function}, we know that $w_i$ are zero-mean along all directions. Therefore, we define $w_i'$ to be an independent copy of $w_i$, i.e., $w_i'\perp w_i$.
    \begin{align*}
        \EE_{w_{1:n}}[\Opt^\dagger(\hat{f})]&\le \EE_{w_{1:n}}\sbr{\sup_{f\in\cB_r(f^\dagger)}\frac{1}{n}\sum_{i=1}^{n}w_i\rbr{f(x_i)-f^\dagger(x_i)}}\\
    &=\EE_{w_{1:n}}\sbr{\sup_{f\in\cB_r(f^\dagger)}\frac{1}{n}\sum_{i=1}^{n}(w_i-\EE[w_i'|e_i])(f(x_i)-f^\dagger(x_i))}\\
    &\le\EE_{w,w'}\sbr{\sup_{f\in\cB_r(f^\dagger)}\frac{1}{n}\sum_{i=1}^{n}(w_i-w_i')(f(x_i)-f^\dagger(x_i))},
    \end{align*}
    where the equality is by Assumption \ref{ass:loss_function} and the second inequality is by Jenson's inequality. 
    Conditioned on $e_i,i=1,\cdots,n$, $w_i-w_i'$ has a symmetric distribution, then, we get $w_i-w_i'\overset{d}{=}2\varepsilon_i\tilde{w}_i$, where $\cbr{\varepsilon_i}_{i=1}^{n}$ is the Rademacher random variable sequence we simulate in Algorithm \ref{alg:wild-refitting}. By the definition of $Z_n^\varepsilon(r)$, we have
    \[
    \EE_{w_{1:n},w'_{1:n}}\sbr{\sup_{f\in\cB_r(f^\dagger)}\frac{1}{n}\sum_{i=1}^{n}(w_i-w_i')(f(x_i)-f^\dagger(x_i))}=2\EE_{\varepsilon_{1:n},\tilde{w}_{1:n}}[Z_n^\varepsilon(r)].
    \]
    Therefore, we prove
    \[
    \EE_{w_{1:n}}[\Opt^\dagger(\hat{f})]\le 2\EE_{\varepsilon_{1:n},\tilde{w}_{1:n}}[Z_n^\varepsilon(r)].
    \]
    Combining this with the concentration inequality of $S(w)$, we finish the proof.
   \end{proof}
\begin{proof}[Proof of Lemma \ref{lemma:Zn<Un}]
    Recall the definition that $$U_n^\varepsilon(r)=\sup_{f\in\cB_r(\hat{f})}\cbr{\frac{1}{n}\sum_{i=1}^{n}\varepsilon_i\tilde{w}_i(f(x_i)-\hat{f}(x_i))},$$ 
    $$Z_n^\varepsilon(r)=\sup_{f\in\cB_r(f^\dagger)}\cbr{\frac{1}{n}\sum_{i=1}^{n}\varepsilon_i\tilde{w}_i(f(x_i)-f^\dagger(x_i))}.$$
    Then, denoting $h$ to be any function that achieves the supremum in $Z_n^\varepsilon(r)$, then we have
    \begin{align*}
        Z_n^\varepsilon(r)=&\frac{1}{n}\sum_{i=1}^{n}(\varepsilon_i\tilde{w}_i)(h(x_i)-f^\dagger(x_i))\\
        =&\frac{1}{n}\sum_{i=1}^{n}\varepsilon_i\tilde{w}_i(h(x_i)-\hat{f}(x_i)+\hat{f}(x_i)-f^\dagger(x_i))\\
        =&\underbrace{\frac{1}{n}\sum_{i=1}^{n}\varepsilon_i\tilde{w}_i(h(x_i)-\hat{f}(x_i))}_{\text{term I}}+\underbrace{\frac{1}{n}\sum_{i=1}^{n}\varepsilon_i\tilde{w}_i(\hat{f}(x_i)-f^\dagger(x_i))}_{\text{term II}}.
    \end{align*}
    We take expectation with respect to $\varepsilon_i,\ i=1:n$ on both terms and analyze them separately.
    For term II, $\varepsilon_i$ is independent of $\hat{f}$ and $f^\dagger$, therefore,
    \[
    \EE_{\varepsilon_{1:n}}[\frac{1}{n}\sum_{i=1}^{n}\varepsilon_i\tilde{w}_i(\hat{f}(x_i)-f^\dagger(x_i))]=0.
    \]
    Then, $\EE_{\varepsilon_{1:n},\tilde{w}_{1:n}}[\frac{1}{n}\sum_{i=1}^{n}\varepsilon_i\tilde{w}_i(\hat{f}(x_i)-f^\dagger(x_i))]=0$.
    
    For term I,since $h\in\cB_r(f^\dagger)$, then given $r\ge\hat{r}_n=\|\hat{f}-f^\dagger\|_n$, by the triangle inequality, we know that
    \[
    \|h-\hat{f}\|_n\le r+\hat{r}_n=:r_1.
    \]
    Thus,
    \[
    \text{term I}\le \sup_{f\in\cB_{r_1}(\hat{f})}\frac{1}{n}\sum_{i=1}^{n}\varepsilon_i\tilde{w}_i(f(x_i)-\hat{f}(x_i))=U_n^\varepsilon(r+\hat{r}_n).
    \]
    Taking expectation with respect to $\varepsilon_{1:n}$ and $\tilde{w}_{1:n}$, we finish the proof.
   \end{proof}
\begin{proof}[Proof of Lemma \ref{lemma:bound_E[Un]}]
    For the second claim, we first condition on $\tilde{w}_{1:n}$ and view $U_n^\varepsilon(r)$ as a function of $\varepsilon$. We define function
    \[
    Y(\varepsilon):=\sup_{f\in\cB_r(\hat{f})}\cbr{\frac{1}{n}\sum_{i=1}^{n}\varepsilon_i\tilde{w}_i(f(x_i)-\hat{f}(x_i))}
    \]
    For $r\ge \hat{r}_n$, we have that
    \begin{align*}
        Y(\varepsilon)-Y(\varepsilon')&=\sup_{f\in\cB_r(\hat{f})}\cbr{\frac{1}{n}\sum_{i=1}^{n}\varepsilon_i\tilde{w}_i(f(x_i)-\hat{f}(x_i))}-\sup_{f\in\cB_r(\hat{f})}\cbr{\frac{1}{n}\sum_{i=1}^{n}\varepsilon_i'\tilde{w}_i(f(x_i)-\hat{f}(x_i))}\\
        &\le \sup_{f\in\cB_r(\hat{f})}\cbr{\frac{1}{n}\sum_{i=1}^{n}(\varepsilon_i-\varepsilon_i')\tilde{w}_i(f(x_i)-\hat{f}(x_i))}\\
        &\le \sup_{f\in\cB_r(\hat{f})}\cbr{\frac{1}{n}\sum_{i=1}^{n}|(\varepsilon_i-\varepsilon_i')\tilde{w}_i|\cdot|f(x_i)-\hat{f}(x_i)|}\\
        &\le \frac{(\max_{i=1,\cdots,n}\|\tilde{w}_i\|_2)r}{\sqrt{n}}\|\varepsilon-\varepsilon'\|_2.
    \end{align*}
    Similarly, $Y(\varepsilon')-Y(\varepsilon)$ has the same upper bound, so we conclude that $Y(\varepsilon)$ is Lipschitz-continuous with constant $\frac{(\max_{i=1,\cdots,n}\|\tilde{w}_i\|_2)rt}{\sqrt{n}}$. Therefore, applying Lemma \ref{lemma:lipschitz_concentration}, we have that with probability at least $1-e^{-t^2}$,
    \[
    \EE_{\varepsilon_{1:n}}[U_n^\varepsilon(r)]\le U_n^\varepsilon(r)+\frac{\sqrt{2}(\max_{i=1,\cdots,n}|\tilde{w}_i|_2)rt}{\sqrt{n}}.
    \]
    Taking expectation on $\tilde{w}_{1:n}$, we have that
    \[
    \EE_{\varepsilon_{1:n},\tilde{w}_{1:n}}[U_n^\varepsilon(r)]\le \EE_{\tilde{w}_{1:n}}[U_n^\varepsilon(r)]+\frac{\sqrt{2}\EE_{\tilde{w}_{1:n}}[\max_{i=1,\cdots,n}|\tilde{w}_i|]rt}{\sqrt{n}}.
    \]
    We focus on the right hand side of this inequality. For the term $\EE_{\tilde{w}_{1:n}}[\max_{i\in\cbr{1,\cdots,n}}|\tilde{w}_i|]$, we apply Lemma \ref{lemma:sub_gaussian_concentration} and get
    \[
    \EE_{\tilde{w}_{1:n}}[\max_{i\in\cbr{1,\cdots,n}}|\tilde{w}_i|]\le \sigma\sqrt{8(\log 5+\log 2n)}.
    \]
    For the term $\EE_{\tilde{w}_{1:n}}[U_n^\varepsilon(r)]$, we define the function 
    $$Q(\tilde{w}):=\sup_{f\in\cB_r(\hat{f})}\cbr{\frac{1}{n}\sum_{i=1}^{n}\varepsilon_i\tilde{w}_i(f(x_i)-\hat{f}(x_i))}.$$
    Then, we have that
    \begin{align*}
        Q(\tilde{w})-Q(\tilde{w}')\le\sup_{f\in\cB_r(\hat{f})}\cbr{\frac{1}{n}\sum_{i=1}^{n}|\tilde{w}-\tilde{w}'|\cdot|f(x_i)-\hat{f}(x_i)|}\le  \frac{r}{\sqrt{n}}\|\tilde{w}-\tilde{w}'\|_2.
    \end{align*}
    We then apply Lemma \ref{lemma:lipschitz_concentration}, to get that with probability at least $1-e^{-t^2}$,
    \[
    \EE_{\tilde{w}_{1:n}|}[U_n^\varepsilon(r)]\le U_n^\varepsilon(r)+\frac{\sqrt{2}\sigma rt}{\sqrt{n}}.
    \]
    Combining all these parts together, we have that with probability at least $1-2e^{-t^2}$,
    \[
    \EE_{\varepsilon_{1:n},\tilde{w}_{1:n}|}[U_n^\varepsilon(r)]\le U_n^\varepsilon(r)+\frac{\sqrt{2}\sigma(5+3\sqrt{\log n}+1)rt}{\sqrt{n}}.
    \]
    So we finish the proof.
   \end{proof}
    
\begin{proof}[Proof of Lemma \ref{lemma:Un<wild_opt}]
    Recall that $w_i'$ is an independent copy of $w_i=\cl_1'(f^*(x_i),y_i)$; then we have
\begin{align*}
    U_n^\varepsilon(r)=&\sup_{f\in\cB_r(\hat{f})}\cbr{\frac{1}{n}\sum_{i=1}^{n}\varepsilon_i\frac{w_i-w_i'}{2}(f(x_i)-\hat{f}(x_i))}\\
    \le&\underbrace{\frac{1}{2}\sup_{f\in\cB_r(\hat{f})}\cbr{\frac{1}{n}\sum_{i=1}^{n}\varepsilon_i w_i(f(x_i)-\hat{f}(x_i))}}_{\text{term I}}+\underbrace{\frac{1}{2}\sup_{f\in\cB_r(\hat{f})}\cbr{\frac{1}{n}\sum_{i=1}^{n}\varepsilon_i w_i'(\hat{f}(x_i)-f(x_i))}}_{\text{term II}}.\\
\end{align*}
For the first term, we could directly use the property of $f_{\rho_1}^\dia$ to get:
\[
\sup_{f\in\cB_r(\hat{f})}\cbr{\frac{1}{n}\sum_{i=1}^{n}\varepsilon_iw_i(f(x_i)-\hat{f}(x_i))}\le \Opt^\dia_{\rho_1}(f_{\rho_1}^\dia)+B_n^\dia(\hat{f}).
\]
For term II, it is trickier because $w_i'$ is not the original noise, but rather a conditionally independent copy whose realization is not contained in the dataset. We first apply Lemma \ref{lemma:lipschitz_concentration} to obtain that, with probability at least $1-e^{-t^2}$,
\[
\sup_{f\in\cB_r(\hat{f})}\cbr{\frac{1}{n}\sum_{i=1}^{n}\varepsilon_i w_i'(\hat{f}(x_i)-f(x_i))}\le \EE_{\varepsilon, w_i'}\sbr{\sup_{f\in\cB_r(\hat{f})}\frac{1}{n}\sum_{i=1}^{n}\varepsilon_i w_i'(\hat{f}(x_i)-f(x_i))}+\frac{\sqrt{2}r\sigma t}{\sqrt{n}}.
\]
By the definition of independent copy, we have
\begin{align*}
&\EE_{\varepsilon, w'}\sbr{\sup_{f\in\cB_r(\hat{f})}\frac{1}{n}\sum_{i=1}^{n}\varepsilon_i w_i'(\hat{f}(x_i)-f(x_i))}+\frac{\sqrt{2}r\sigma t}{\sqrt{n}}\\
=&\EE_{\varepsilon, w}\sbr{\sup_{f\in\cB_r(\hat{f})}\frac{1}{n}\sum_{i=1}^{n}\varepsilon_i w_i(\hat{f}(x_i)-f(x_i))}+\frac{\sqrt{2}r\sigma t}{\sqrt{n}}
\end{align*}
Again, by applying Lemma \ref{lemma:lipschitz_concentration}, we have that with probability at least $1-e^{-t^2}$,
\[
\EE_{\varepsilon, w}\sbr{\sup_{f\in\cB_r(\hat{f})}\frac{1}{n}\sum_{i=1}^{n}\varepsilon_i w_i(\hat{f}(x_i)-f(x_i))}\le \sup_{f\in\cB_r(\hat{f})}\frac{1}{n}\sum_{i=1}^{n}\varepsilon_i w_i(\hat{f}(x_i)-f(x_i))+\frac{\sqrt{2}r\sigma t}{\sqrt{n}}.
\]
Thus, we have that with probability at least $1-2e^{-t^2}$, for any $r>0$,
\begin{align*}
U_n^\varepsilon(r)\le &\frac{1}{2}\rbr{\sup_{f\in\cB_r(\hat{f})}\{\frac{1}{n}\sum_{i=1}^{n}\varepsilon_i w_i(f(x_i)-\hat{f}(x_i))\}+\sup_{f\in\cB_r(\hat{f})}\{\frac{1}{n}\sum_{i=1}^{n}\varepsilon_i w_i(\hat{f}(x_i)-f(x_i))\}}\\
&+\frac{\sqrt{2}\sigma rt}{\sqrt{n}}.
\end{align*}
By the doubly wild refitting procedure, we have
    \[
    \varepsilon_i\tilde{g}_i=\varepsilon_i\cl_1'(\tilde{f}(x_i),y_i)=\varepsilon_i\rbr{\cl_1'(\tilde{f}(x_i),y_i)-\cl_1'(f^*(x_i),y_i)}+\varepsilon_i\cl_1'(f^*(x_i),y_i).
    \]
    This is equivalent to
    \[
    \varepsilon_i w_i=\varepsilon_i\tilde{g}_i+\varepsilon_i\rbr{\cl_1'(f^*(x_i),y_i)-\cl_1'(\tilde{f}(x_i),y_i)}.
    \]
    Plugging this in, we have 
    \begin{align*}
        &\sup_{f\in\cB_r(\hat{f})}\cbr{\frac{1}{n}\sum_{i=1}^{n}\varepsilon_i w_i(f(x_i)-\hat{f}(x_i))}+\sup_{f\in\cB_r(\hat{f})}\cbr{\frac{1}{n}\sum_{i=1}^{n}\varepsilon_i w_i(\hat{f}(x_i)-f(x_i))}\\
        \le&\sup_{f\in\cB_r(\hat{f})}\cbr{\frac{1}{n}\sum_{i=1}^{n}\varepsilon_i \tilde{g}_i(f(x_i)-\hat{f}(x_i))}+\sup_{f\in\cB_r(\hat{f})}\cbr{\frac{1}{n}\sum_{i=1}^{n}\varepsilon_i \tilde{g}_i(\hat{f}(x_i)-f(x_i))}\\
        +&\sup_{f\in\cB_r(\hat{f})}\cbr{\frac{1}{n}\sum_{i=1}^{n}\varepsilon_i\rbr{\cl_1'(f^*(x_i),y_i)-\cl_1'(\tilde{f}(x_i),y_i)}(f(x_i)-\hat{f}(x_i))}\\
        +&\sup_{f\in\cB_r(\hat{f})}\cbr{\frac{1}{n}\sum_{i=1}^{n}\varepsilon_i\rbr{\cl_1'(f^*(x_i),y_i)-\cl_1'(\tilde{f}(x_i),y_i)}(\hat{f}(x_i)-f(x_i))}\\
    \end{align*}
By the condition that $\|f^\dia_{\rho_1}-\hat{f}\|_n=\|f^\sh_{\rho_2}-\hat{f}\|_n=2r$, we apply Lemma \ref{lemma:bound_W_n-T_n} and set $r\leftarrow2r$ to have:
    \[
    \sup_{f\in\cB_{2r}(\hat{f})}\cbr{\frac{1}{n}\sum_{i=1}^{n}\varepsilon_i \tilde{g}_i(f(x_i)-\hat{f}(x_i))}=W_n(\|f^\dia_{\rho_1}-\hat{f}\|_n)\le \Opt^\dia(f^\dia_{\rho_1});
    \]
    \[
    \sup_{f\in\cB_{2r}(\hat{f})}\cbr{\frac{1}{n}\sum_{i=1}^{n}\varepsilon_i \tilde{g}_i(\hat{f}(x_i)-f(x_i))}=T_n(\|f^\sh_{\rho_2}-\hat{f}\|_n)\le \Opt^\sh(f^\sh_{\rho_2});
    \]
    \[
    B_n^\dia(\hat{f}):=\sup_{f\in\cB_{2r}(\hat{f})}\cbr{\frac{1}{n}\sum_{i=1}^{n}\varepsilon_i\rbr{\cl_1'(f^*(x_i),y_i)-\cl_1'(\tilde{f}(x_i),y_i)}(f(x_i)-\hat{f}(x_i))};
    \]
    \[
    B_n^\sh(\hat{f}):=\sup_{f\in\cB_{2r}(\hat{f})}\cbr{\frac{1}{n}\sum_{i=1}^{n}\varepsilon_i\rbr{\cl_1'(f^*(x_i),y_i)-\cl_1'(\tilde{f}(x_i),y_i)}(\hat{f}(x_i)-f(x_i))}.
    \]
    Adding them together, we prove that when $r\ge\hat{r}_n$, and the noise scales $\rho_1$, $\rho_2$ satisfy $\|f^\dia_{\rho_1}-\hat{f}\|_n=\|f^\sh_{\rho_2}-\hat{f}\|_n=2r$, with probability at least $1-2e^{-t^2}$,
    \begin{align}\label{ineq:bounding_Un}
    U_n^\varepsilon(2r)\le &\frac{1}{2}\rbr{W_n(2r)+T_n(2r)+B_n^\dia(\hat{f})+B_n^\sh(\hat{f})}+\frac{2\sqrt{2}\sigma rt}{\sqrt{n}}\nonumber\\
    \le& \frac{1}{2}\rbr{\Opt^\dia(f^\dia_{\rho_1})+\Opt^\sh(f^\sh_{\rho_2})+B_n^\dia(\hat{f})+B_n^\sh(\hat{f})}+\frac{2\sqrt{2}\sigma rt}{\sqrt{n}},
    \end{align}
where $\Opt^\dia(f^\dia_{\rho_1}),\ \Opt^\sh(f^\sh_{\rho_2})$ are wild optimisms and $B_n^\dia(\hat{f}),\ B_n^\sh(\hat{f})$ are pilot error terms. Thus, we finish the proof.    
   \end{proof}
\begin{proof}[Proof of Lemma \ref{lemma:bound_hatr_lemma1}]
Recall that
$f^\dagger\in\argmin_{f\in\cF}\cbr{\frac{1}{n}\sum_{i=1}^{n}\EE_{y_i}[\cl(f(x_i),y_i)]}$. Then, we compute the Gâteaux derivative and apply the first order optimality condition to get that
\[
\frac{1}{n}\sum_{i=1}^{n}\EE_{y_i}[\cl_1'(f^\dagger(x_i),y_i)](f(x_i)-f^\dagger(x_i))=0.
\]
By Assumption \ref{ass:loss_function} and the well-specified condition $f^\dagger=f^*$, we have that
\begin{align*}
    \|\hat{f}-f^\dagger\|_n^2\le&\frac{2}{\alpha}\frac{1}{n}\sum_{i=1}^{n}D_{\cl(\cdot,y_i)}(\hat{f}(x_i),f^\dagger(x_i))\\
    =&\frac{2}{\alpha n}\sum_{i=1}^{n}\cl(\hat{f}(x_i),y_i)-\cl(f^\dagger(x_i),y_i)(-\cl_1'(f^\dagger(x_i),y_i))(\hat{f}(x_i)-f^\dagger(x_i))\\
    \le &\frac{2}{\alpha n}\sum_{i=1}^{n}-\cl_1'(f^\dagger(x_i),y_i)(\hat{f}(x_i)-f^\dagger(x_i))\\
    =&\frac{2}{\alpha n}\sum_{i=1}^{n}(-w_i)(\hat{f}(x_i)-f^\dagger(x_i)).
\end{align*}
In the last inequality, we use the property that ERM $\hat{f}$ minimizes the empirical risk. In the last equality, we apply the well-specification condition that $f^\dagger=f^*$. 

Define the empirical process $M(w):=\sup_{f\in\cB_{\hat{r}_n}(f^\dagger)}\cbr{\frac{1}{n}\sum_{i=1}^{n}(-w_i)(\hat{f}(x_i)-f^\dagger(x_i))}$.
Then, we apply the Lemma \ref{lemma:lipschitz_concentration} to get that for any $t>0$, with probability at least $1-e^{-t^2}$,
\[
M(w)\le \EE_{w_{1:n}}[M(w)]+\frac{\sqrt{2}\sigma \hat{r}_n t}{\sqrt{n}}.
\]
Similar to the proof of Lemma \ref{lemma:Opt_dagger<Zn}, we have that
\[
\EE_{w_{1:n}}[M(w)]\le 2\EE_{w_{1:n},\tilde{w}_{1:n}}[Z_n^\varepsilon(r)].
\]
By Lemma \ref{lemma:Zn<Un} and Lemma \ref{lemma:bound_E[Un]}, we have that with probability at least $1-2e^{-t^2}$,
\[
\EE_{w_{1:n},\tilde{w}_{1:n}}[Z_n^\varepsilon(r)]\le U_n^\varepsilon(2\hat{r}_n)+\frac{\sqrt{2}\sigma(5+3\sqrt{\log n}+1)\hat{r}_n t}{\sqrt{n}}.
\]
Combining these two parts together, we have that
\[
M(w)\le 2 U_n^\varepsilon(2\hat{r}_n)+\frac{2\sqrt{2}\sigma(3\sqrt{\log n}+7)\hat{r}_n t}{\sqrt{n}},
\]
which implies
\[
\hat{r}_n^2\le \frac{2}{\alpha}\rbr{2U_n^\varepsilon(2\hat{r}_n)+\frac{2\sqrt{2}\sigma(3\sqrt{\log n}+7)\hat{r}_n t}{\sqrt{n}}}.
\]
This is a quadratic inequality. Solving the inequality, we have that
\begin{align*}
    \hat{r}_n^2\le& \frac{8}{\alpha}U_n^\varepsilon(2\hat{r}_n)+\frac{32\sigma^2(3\sqrt{\log n}+7)^2t^2}{\alpha^2 n}.\\
\end{align*}
So we finish the proof.
   \end{proof}
\begin{proof}[Proof of Lemma \ref{lemma:bound_hatr_lemma2}]
    The proof is the same as proving Lemma \ref{lemma:Un<wild_opt}. By inequality \ref{ineq:bounding_Un}, we have that 
    \[
    U_n^\varepsilon(2r)\le \frac{1}{2}\rbr{W_n(2r)+T_n(2r)+B_n^\dia(\hat{f})+B_n^\sh(\hat{f})}+\frac{2\sqrt{2}\sigma rt}{\sqrt{n}}.
    \]
    This is exactly our claim.
\end{proof}

\section{Computation}\label{app:computation}
In this section, we provide an efficient method for computing $y_i^\dia$ and $y_i^\sh$ in our doubly wild refitting Algorithm \ref{alg:wild-refitting}. The method is called the proximal point algorithm (PPA). 

In general, the proximal point algorithm (PPA) aims to solve the monotone inclusion problem $0\in T(x)$ for maximal monotone operator $T$. By the result of \citet{minty1962monotone}, the Moreau–Yosida resolvent $P_{\lambda}=(I+\lambda T)^{-1}$ will be firmly non-expansive. This suggests that the inclusion problem $0\in T(x)$ can be iteratively approximated via the following recursion $z_{k+1}\approx P_{c_k}(z_k)$, where $c_k$ is the step size or learning rate. We write "$\approx$"  because computing "$z_{k+1}=P_{c_k}(z_k)$" can be computationally intractable. The following theorem of \citet{rockafellar1976monotone} provides the convergence guarantees for this inexact proximal point algorithm.
\begin{theorem}
    Define the operator $S_k(z)=T_k(z)+\frac{1}{c_k}(z-z_k)$. Assuming that computationally, we have the following approximation procedure $z_{k+1}\approx P_k(z_k)=(I+c_k T)^{-1}(z_k)$ that satisfies 
    $$\text{dist}(0,S_k(z_{k+1})=\|S_k(z_{k+1})\|_2\le \frac{\delta_k}{c_k}\|z_{k+1}-z_k\|_2,$$
    for some convergent series $\sum_{i=1}^{\infty}\delta_k<\infty$, then, we have that
    \[
    \|z_{k+1}-P_{c_k}(z_k)\|_2\le \delta_k\|z_{k+1}-z_k\|_2. 
    \]
    The limit point $z_{\infty}=\lim_{k\rightarrow\infty}z_k$ satisfies that $T(z_\infty)=0$. Moreover, if $T^{-1}$ is Lipschitz continuous with constant $a$ in a neighborhood around $0$ with a radius $\tau>0$. Then, for $\cbr{c_k}_{k=1}^{\infty}$ nondecreasing, define $\alpha_k=\frac{a}{\sqrt{a^2+c_k^2}}<1$. There $\exists$ $\bar{k}$ such that for all $k>\bar{k}$,
    \[
    \|z_{k+1}-z_{\infty}\|_2\le \frac{\alpha_k+\delta_k}{1-\delta_k}\| z_{k}-z_{\infty}\|_2,\ \frac{\alpha_k+\delta_k}{1-\delta_k}<1,\ \forall k\ge\bar{k}.
    \]
\end{theorem}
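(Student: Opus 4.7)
The plan is to establish the three assertions in sequence by exploiting (i) strong monotonicity of the shifted operator $S_k$ to control the inexactness, (ii) a Fejér-type argument for convergence to a zero of $T$, and (iii) the local Lipschitz hypothesis on $T^{-1}$ to upgrade convergence to a linear rate.

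For the first claim, I would observe that since $T$ is monotone and $S_k(z)=T(z)+c_k^{-1}(z-z_k)$, the operator $S_k$ is $c_k^{-1}$-strongly monotone, so that $S_k^{-1}$ is single-valued and Lipschitz with constant $c_k$. Since the exact proximal point satisfies $P_{c_k}(z_k)=S_k^{-1}(0)$, picking a minimum-norm element $v\in S_k(z_{k+1})$ and applying the hypothesis yields
\[
\|z_{k+1}-P_{c_k}(z_k)\|_2 \;=\; \|S_k^{-1}(v)-S_k^{-1}(0)\|_2 \;\le\; c_k\|v\|_2 \;\le\; \delta_k\,\|z_{k+1}-z_k\|_2.
\]

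Next, for the convergence claim, I would fix any $z^{\ast}\in T^{-1}(0)$ (nonempty for the algorithm to be meaningful) and note that $z^{\ast}$ is a fixed point of the non-expansive resolvent $P_{c_k}$. Combined with the first step, this yields
\[
\|z_{k+1}-z^{\ast}\|_2 \;\le\; \|z_k-z^{\ast}\|_2 + \delta_k\bigl(\|z_{k+1}-z^{\ast}\|_2+\|z_k-z^{\ast}\|_2\bigr),
\]
so summability of $\delta_k$ gives boundedness of $\{z_k\}$ plus summability of $\|z_k-P_{c_k}(z_k)\|_2$; a standard closedness-of-graph argument then shows every cluster point of $\{z_k\}$ lies in $T^{-1}(0)$, and Fejér monotonicity up to summable perturbations singles out a unique limit $z_\infty\in T^{-1}(0)$.

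Finally, for the linear rate, I would rely on the Pythagorean-type identity in a Hilbert space. Setting $v_k:=c_k^{-1}(z_k-P_{c_k}(z_k))\in T(P_{c_k}(z_k))$ and using $0\in T(z_\infty)$, monotonicity gives $\langle v_k,P_{c_k}(z_k)-z_\infty\rangle\ge 0$, so expanding the square yields
\[
\|z_k-z_\infty\|_2^{\,2} \;\ge\; c_k^{2}\|v_k\|_2^{\,2} + \|P_{c_k}(z_k)-z_\infty\|_2^{\,2}.
\]
For $k\ge\bar k$ convergence ensures $\|v_k\|_2\le\tau$, so the Lipschitz hypothesis gives $\|P_{c_k}(z_k)-z_\infty\|_2\le a\|v_k\|_2$; substituting into the previous display yields $\|P_{c_k}(z_k)-z_\infty\|_2\le \mu_k\|z_k-z_\infty\|_2$ with $\mu_k=a/\sqrt{a^2+c_k^{2}}$. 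Combining this with the first claim and the triangle bound $\|z_{k+1}-z_k\|_2\le\|z_{k+1}-z_\infty\|_2+\|z_k-z_\infty\|_2$ rearranges to $(1-\delta_k)\|z_{k+1}-z_\infty\|_2\le(\mu_k+\delta_k)\|z_k-z_\infty\|_2$, which is the claimed rate. The main obstacle is the convergence step: carefully handling the accumulation of inexact errors across iterations via $\sum_k\delta_k<\infty$ so that one obtains an honest Fejér-monotone behavior modulo a summable perturbation, rather than just an asymptotic statement.
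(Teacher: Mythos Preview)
The paper does not supply its own proof of this theorem: it is stated as a quotation of Rockafellar's 1976 result and used as a black box for the computational discussion in Appendix~\ref{app:computation}. Your sketch is therefore not being compared against anything in the paper, but it is a correct reconstruction of Rockafellar's original argument: the three pillars---strong monotonicity of $S_k$ to bound the inexactness, Fej\'er monotonicity modulo summable errors for convergence, and the Pythagorean inequality combined with the local Lipschitz property of $T^{-1}$ for the linear rate---are exactly the standard route, and each of your computations checks out.
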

 Therefore, it suffices to focus on computing the iterative update $z_{k+1}\approx P_k(z_k)$. A variety of numerical methods have been developed for evaluating such proximal-type operators, including Douglas–Rachford splitting \citep{eckstein1992douglas}, Peaceman–Rachford splitting \citep{lions1979splitting}, and the Alternating Direction Method of Multipliers (ADMM) \citep{boct2019admm} concerning monotone operators. We do not elaborate on these algorithms here; instead, we refer the reader to the cited works for detailed discussions.

\end{document}